\PassOptionsToPackage{table}{xcolor}
\documentclass[10pt,twocolumn,letterpaper]{article}

\usepackage{cvpr}              

%
%


%
\definecolor{cvprblue}{rgb}{0.21,0.49,0.74}
\usepackage[pagebackref,breaklinks,colorlinks,allcolors=cvprblue]{hyperref}
\usepackage{optidef}
\usepackage{amsmath, amsthm, amssymb}

\newtheorem{lemma}{Lemma}

\newtheorem{theorem}{Theorem}

\usepackage{verbatim}
\usepackage{adjustbox}
\usepackage{graphicx} 
\usepackage{float}
\usepackage{soul}
\usepackage{multirow}
\usepackage{booktabs}
\usepackage{tocloft}
\usepackage{titletoc}
\usepackage[accsupp]{axessibility}


\title{Towards Source-Free Machine Unlearning}


\author{%
Sk Miraj Ahmed\textsuperscript{1,2,}\thanks{Equal contribution.} \textsuperscript{,}\thanks{Work partially done while at University of California, Riverside.},\hspace{0.1cm}
Umit Yigit Basaran\textsuperscript{2,}\footnotemark[1],\hspace{0.1cm}
Dripta S. Raychaudhuri\textsuperscript{3}\textsuperscript{,}\thanks{Work done outside AWS},\hspace{0.1cm}
Arindam Dutta\textsuperscript{2} \\
Rohit Kundu\textsuperscript{2}, Fahim Faisal Niloy\textsuperscript{2}, Ba\c{s}ak G\"{u}ler\textsuperscript{2}, Amit K. Roy-Chowdhury\textsuperscript{2} \\
\textsuperscript{1}Brookhaven National Laboratory \quad
\textsuperscript{2}University of California, Riverside \quad
\textsuperscript{3}AWS AI \\
\texttt{\small \{sahme047@, ubasa001@, drayc001@, adutt020@\}ucr.edu} \\
\texttt{\small \{rkund006@, fnilo001@, bguler@ece., amitrc@ece.\}ucr.edu}
}

\begin{document}

\maketitle
\begin{abstract}
As machine learning becomes more pervasive and data privacy regulations evolve, the ability to remove private or copyrighted information from trained models is becoming an increasingly critical requirement. Existing unlearning methods often rely on the assumption of having access to the entire training dataset during the forgetting process. However, this assumption may not hold true in practical scenarios where the original training data may not be accessible, i.e., the source-free setting. To address this challenge, we focus on the source-free unlearning scenario, where an unlearning algorithm must be capable of removing specific data from a trained model without requiring access to the original training dataset. Building on recent work, we present a method that can estimate the Hessian of the unknown remaining training data, a crucial component required for efficient unlearning. Leveraging this estimation technique, our method enables efficient zero-shot unlearning while providing robust theoretical guarantees on the unlearning performance, while maintaining performance on the remaining data. Extensive experiments over a wide range of datasets verify the efficacy of our method.
\end{abstract}    
\section{Introduction}
\label{sec:intro}

Machine learning models have achieved significant success by training on large amounts of annotated data, much of which may include sensitive or private information \cite{guo2019certified}. With the introduction of data protection rules such as the General Data Protection Regulation (GDPR) \cite{voigt2017eu}, there is a growing need for algorithms that can delete (or forget) information learned from such sensitive datasets. Furthermore, privacy concerns may prompt individuals to request the removal of their data from the training set, invoking their ``right to be forgotten'' \cite{mantelero2013eu}. A straightforward solution to this issue would be to retrain the model from scratch using only the non-private subset of the original dataset. However, retraining is computationally inefficient and impractical (and impossible in the source-free setting we introduce in this paper). This highlights the necessity for efficient \textit{Machine Unlearning} (MU) \cite{golatkar2020eternal, ye2022learning} algorithms that enable modifications to the trained model parameters to forget specified data while maintaining performance on the remaining data.

Although several recent machine unlearning algorithms have demonstrated reasonable success on existing benchmarks \cite{tarun2023fast, kurmanji2024towards}, nearly all current approaches \cite{golatkar2020eternal, ye2022learning, tarun2023fast, kurmanji2024towards} assume the availability of the remaining data, either in full or in part. In practical settings, storing such large volumes of data is challenging due to storage costs and privacy issues. Consequently, these methods fail to address scenarios where the \emph{model owner no longer has access to the original training data}. In these situations, ensuring accurate and efficient unlearning becomes markedly more difficult. Without the original data, it is challenging to verify that the specified information has been entirely removed from the model and that the model's performance on the remaining data remains unaffected. This limitation underscores the pressing need to develop robust unlearning techniques that can function effectively even when the original training dataset is inaccessible, i.e., the source-free setting. 

A recent study has introduced a solution for this challenge, referring to the setting as "zero-shot machine unlearning" \cite{chundawat2023zero}, which works by solely requiring access to the trained model weights and the data to be forgotten, without needing the original training dataset. However, a significant limitation of this technique is its inability to forget random instances encompassing diverse classes; it can only selectively forget particular data classes.
This constraint could hinder its practicality in scenarios where users only want certain instances unlearned, as this method discards all the data pertaining to a user, rather than selectively removing certain examples.
To tackle this limitation, another recent investigation \cite{cha2024learning} attempts zero-shot unlearning at the instance level instead of the class level. This approach enables the removal of requested data without requiring access to the complete training dataset. However, it suffers from scalability issues, as increasing the number of instances results in a significant drop in performance on both test data and the remaining dataset, which is undesirable for effective and reliable machine unlearning. Additionally, these methods fall short in providing robust theoretical assurances regarding their respective performance.

Considering the aforementioned issues, \emph{we propose an unlearning algorithm} that excels in such scenarios, \emph{where the original training data is unavailable, while providing robust theoretical guarantees}. We term this as ``source-free machine unlearning", in analogy to source-free domain adaptation methods \cite{liang2020we}. (We believe this is a better term than zero-shot unlearning.) Inspired by \cite{guo2019certified}, we study the unlearning mechanisms of $\ell_2$ regularized linear models with differentiable convex loss functions. Specifically, \citet{guo2019certified} define a Newton update step on the model parameters, which can be used to perform unlearning. This step is proven to be optimal for the quadratic loss function, and for strongly convex Lipschitz loss functions, the discrepancy between this step and optimal forgetting is bounded. Crucially, this Newton step requires the Hessian of the remaining data with respect to the trained model parameters. However, in our problem setup, we do not have access to the remaining data. Thus, we cannot compute this Hessian directly. 

To address this challenge, we introduce an algorithm that approximately estimates the Hessian of the retained data (referred to as the ``retain Hessian") using only the data designated for removal and the trained model by assuming the loss differences at perturbed points for remaining and forget data are close to each other. By focusing on closely aligning the Hessian estimate with the true value, our approach offers robust theoretical guarantees. This aspect is crucial as it reinforces confidence in the machine unlearning process for data removal, ensuring both accuracy and reliability. 

Our main contributions in this work can be summarized as follows:
\begin{itemize} \item To the best of our knowledge, this work proposes the first zero-shot unlearning method for linear classifiers that can effectively forget arbitrary instances of data across all classes while providing strong theoretical guarantees for data removal and privacy.

\item Recognizing that the Hessian cannot be directly computed from the remaining data, we introduce a novel method for estimating the Hessian based on the data marked for removal. This approach ensures an approximate (i.e., bounded-error) unlearning mechanism applicable to any convex loss functions. While our algorithm is specifically designed for linear classifiers, we demonstrate its applicability in mixed linear scenarios, such as by \textit{linearizing a deep neural network}.

\item We establish \textit{theoretical guarantees} for our unlearning framework through comprehensive proofs and validate our approach with experiments and ablation studies using multiple benchmark datasets.

\end{itemize}
\section{Related Works}
\label{sec:related}

\textbf{Machine unlearning.} Machine unlearning, introduced in \cite{cao2015towards}, aims to efficiently remove the influence of certain training instances from a model's parameters. Unlearning approaches in the literature can be primarily categorized into exact and approximate unlearning methods. Exact unlearning methods aim to ensure that the data is completely unlearned from the model, akin to retraining from scratch. Recent approaches include \cite{bourtoule2021machine, dukler2023safe}, which split the data into multiple shards and train separate models on different non-overlapping combinations of these shards. However, it comes with substantial storage costs since multiple models must be maintained. In contrast, approximate unlearning methods estimate the influence of the unlearning instances and remove it through direct parameter updates. Some approximate methods focus on improving efficiency \cite{wu2020deltagrad} or preserving performance \cite{wu2022puma}, but they lack formal guarantees on data removal. A second group of approximate approaches \cite{guo2019certified,neel2021descent,golatkar2020eternal, golatkar2021mixed} provide theoretical guarantees on the statistical indistinguishability of unlearned and retrained models based on ideas similar to differential privacy \cite{dwork2014algorithmic}. All these methods require access to access to all, or a subset of, the training data. This assumption may not hold true in many practical settings; nevertheless, data privacy concerns may need to be addressed \cite{gdpr2018general}. Recently, machine unlearning has attracted attention and achieved notable success in various applications, such as mitigating bias \cite{chen2024fast}, erasing unwanted or copyrighted content \cite{gandikota2023erasing, li2024machine}, and preventing malicious attacks \cite{yao2023large} in recent large-scale generative models.

\noindent\textbf{Source-free unlearning.} 
A recent paper has proposed a method for unlearning  which works by solely requiring access to the trained model weights and the
data to be forgotten, without needing the original training dataset, referring to it as "zero-shot machine unlearning" \cite{chundawat2023zero}. They propose two approaches: error minimizing-maximizing noise and gated knowledge transfer.
 
The first approach learns a set of noise matrices which maximize the error for the forget set, and a separate set of noise matrices which minimize the error as a proxy for the remaining data. The second approach uses knowledge distillation of the original model into a new model, gated by a filter that prevents the forget set knowledge from being passed, and additionally, supplemented by a generator network for sample generation. A major limitation of these methods is their inability to forget specific instances of different classes; rather, they forget all the data pertaining to a class. 
However, such fine-grained forgetting scenarios are likely to occur in real-world applications, where the need for selective data removal or modification is prevalent. A recent work \cite{cha2024learning} proposes an adversarial sample generation strategy to extend zero-shot unlearning to the instance-wise case. However, this method struggles to scale beyond forgetting a few samples without significantly degrading model performance. Another recent work \cite{foster2024zero} proposes a method called Just-in-Time (JiT) unlearning where they minimize the gradient effect by creating perturbed samples around each forget sample and fine-tune the model with those samples for reducing its reliance on that sample.

Critically, all existing zero-shot machine unlearning methods fail to provide any formal guarantees regarding the completeness or effectiveness of the forgetting process. In practical applications, where data privacy and compliance are paramount, such guarantees are essential to ensure that sensitive information is reliably removed from the model without compromising its overall performance. Additionally, a core contribution of our work is to provide such guarantees when the source data is no longer available.

\section{Preliminaries}
\label{sec:prelim}

The mathematical concept central to the ideal machine unlearning setting is \textit{parameter indistinguishability} \cite{guo2019certified}. In this section, we provide an overview of this definition. Additionally, we present the preliminaries of the unlearning mechanism for linear classifiers under convex losses.

\subsection{Parameter Indistinguishability} 
Consider a data distribution $\mathcal{D} \sim \{x_i, y_i\}_{i=1}^n$ representing a training set used to train a model with a randomized algorithm $\mathcal{A}$ resulting in the output hypothesis space $\mathcal{H}$. Suppose there is a desire to eliminate the influence of $x_i$ from $\mathcal{H}$ using an unlearning mechanism $\Xi$. The unlearning mechanism is said to achieve \textit{parameter indistinguishability}, if $\Xi$ functions in a manner such that the outputs of $\Xi(\mathcal{A}(\mathcal{D}), \mathcal{D}, x_i)$ and $\mathcal{A}(\mathcal{D} \backslash x_i)$ are very close to each other.
The current trend in unlearning research emphasizes demonstrating the efficacy of designed mechanisms using this metric. In simpler terms, the unlearned model should closely mimic, in terms of output space, the model that has been retrained from scratch without the specific data. Further discussion on this aspect will be provided in detail in the experimental section. 
In our case, we explore linear classifiers with the randomized algorithm $\mathcal{A}$ being the supervised learning, using standard convex loss functions.

\subsection{Unlearning of Linear Classifier}
\label{unlearn_linear}
The empirical loss with respect to a linear classifier $w \in \mathbb{R}^d$ and a convex loss function $l: \mathbb{R}^d \rightarrow \mathbb{R}$ can be written as 
$\mathcal{L}(w) =  \sum_{i=1}^{n} l(y_i, w^\top {x}_i) +\frac{\lambda n}{2} \|w\|_2^2$. Let $w^\star = \underset{w}{\text{arg min}} \ \mathcal{L}(w)$ be the optimal linear classifier trained on the distribution $\mathcal{D}$.
To forget a subset of the training data $\mathcal{D}_f \subset \mathcal{D}$, the naive approach involves retraining the classifier over the distribution $\mathcal{D}_r = \mathcal{D}\backslash\mathcal{D}_f$. However, this approach is impractical and time-consuming. A more widely used alternative is to mitigate the influence of the forget dataset using the influence function \cite{guo2019certified, warnecke2021machine} on the optimal model parameters. Mathematically, this unlearning mechanism can be expressed as:
\begin{equation}
    \Xi(w^\star, \mathcal{D},\mathcal{D}_f) = w_{uf} = w^\star + \mathrm{H}_r^{-1} \nabla_f + \sigma^2\varepsilon
\label{unlearn_exact}
\end{equation}


Here, $w_{uf}$ represents the model parameters obtained by unlearning the forget dataset, $w^\star$ is the optimal model parameter obtained using the entire training data, $H_r$ is the Hessian of the remaining dataset, and $\nabla_f$ is the gradient of the forget dataset at the optimal point $w^\star$, $\sigma$ is the variance of the noise term and $\varepsilon$ is the noise sampled from a standard Gaussian distribution. The term $-\mathrm{H}_r^{-1} \nabla_f$ corresponds to the influence of the forget dataset on the model parameters. Also, the noise term seeks to remove any information that could potentially leak due to slight inconsistencies. This unlearning methods is theoretically grounded and the residual norm of the gradient of the unlearned model on the remaining training set $\mathcal{D}_r$ can be tightly upper bounded. However, the assumption of having access to $\mathcal{D}$ during unlearning is strong and we relax this problem where we just have access to $\mathcal{D}_f$. However, without $\mathcal{D}_r$, computing the Hessian $\mathrm{H}_r$ as in \cref{unlearn_exact} is non-trivial. To solve this, we devise a method where we can approximate this $\mathrm{H}_r$ using only $w^\star$ and $\mathcal{D}_f$, which is elaborated in the next section in detail.
\label{param_indis}

\subsection{Mixed Linear Unlearning}
\label{sec:mixed-linear-unlearn}
Due to the highly non-convex structure of the loss landscapes of neural networks, unlearning is a challenging task. As a solution to this problem in \cite{golatkar2021mixed} authors proposed Mixed Linear Unlearning where the loss function is stated as the following:
{\small
\begin{equation*}
    \begin{split}
        \mathcal{L}(w) = \sum_{i = 1}^n \|f_{w_c^\ast}(x_i) + \nabla_wf_{w_c^\ast}(x_i) \cdot w - y_i\|^2_2 
        + \frac{\lambda n }{2}\|w\|^2_2
    \end{split}
\end{equation*}}
where $w_c^\ast$ is the model parameters trained on a public dataset, $f_{w_c^\ast}$ is the function parameterized with $w_c^\ast$ i.e. neural network, and $\nabla_wf_{w_c^\ast}$ is the Jacobian of the function $f_{w_c^\ast}$. The model parameters $w$ are trained on the user-specific dataset where there can be an unlearning request after training. The Neural Tangent Kernel (NTK) \cite{jacot2018neural, li2019enhanced} approach in mixed linear unlearning leverages a first-order Taylor expansion to approximate the network’s behavior in a linearized form, which simplifies the complex task of forgetting in non-linear networks. This approximation transforms the problem into a convex optimization task, allowing data removal to be treated with a closed-form solution rather than requiring full re-optimization. By linearizing the network’s behavior around specific weights, NTKs make the process of forgetting efficient, enabling the model to quickly process data deletion requests without the computational burden of extensive retraining and while preserving most of the model’s predictive performance. Again, the unlearning mechanism proposed for this method is the same as \cref{unlearn_exact}. In our experiments, to show our methods performance on neural networks we utilized the mixed linear unlearning approach. Our proposed method in the next section is easily applicable to this setup.

\section{Methodology}
\label{sec:method}

Given a differentiable convex loss $\mathcal{L}$, we can write the Taylor approximation of this around the optimal classifier $w^\star$ as follows:
%
\begin{equation*}
    \begin{split}
        \mathcal{L}(w)  &\approx \mathcal{L}(w^\star) + \nabla(w^\star)^\top (w - w^\star) \\
        &\quad + \frac{1}{2} (w - w^\star)^\top \textrm{H}(w^\star) (w - w^\star)
    \end{split}
\end{equation*}
%
where the higher-order terms of the Taylor expansion are neglected here due to its relatively small magnitude.
We define the loss difference $ \delta \mathcal{L}$ as follows:
\begin{align}
    \delta \mathcal{L}  &= \mathcal{L}(w) - \mathcal{L}(w^\star) \notag 
\end{align}
Assuming this $\delta \mathcal{L}$ is computed over the whole training data, we denote the loss difference with respect to the retain data $x^r \in \mathcal{D}_r$ as $\delta \mathcal{L}_r$. So,
\begin{equation}
    \begin{split}
        \delta \mathcal{L}_r &\approx \nabla_r(w^\star)^\top (w - w^\star) \\
        &+ \frac{1}{2} (w - w^\star)^\top \textrm{H}_r(w^\star) (w - w^\star)
    \end{split}
    \label{eq:taylor_r}
\end{equation}
Assuming that the training converges to the global optima $w^\star$, we can safely assume that $\nabla(w^\star)=0$, which also means $\nabla_r(w^\star)+ \nabla_f(w^\star)=0 \ \implies \nabla_r(w^\star) = -\nabla_f(w^\star)$. 
Plugging this in \cref{eq:taylor_r} we get the following:

\begin{align}
    \frac{1}{2} (\delta w)^\top \textrm{H}_r(w^\star) (\delta w) - \nabla_f(w^\star)^\top \delta w - \delta \mathcal{L}_r \approx 0\notag 
\end{align}
where, $\delta w = (w - w^\star)$.
With this observation we generate some ($m$ points) small perturbations around the optima $w_i = w^\star + (\delta w)_i$ and calculate the average to formulate the following objective function of the Hessian as follows:
\begin{align*}
    \mathrm{\Psi}(\textrm{H}_r) &= \frac{1}{m} \sum_{i=1}^m \left( \mathrm{f}_i(\textrm{H}_r)\right)^2 
\end{align*}
where
$\mathrm{f}_i(\textrm{H}_r) = \frac{1}{2} (\delta w)_i^\top \textrm{H}_r(w^\star) (\delta w)_i - \nabla_f(w^\star)^\top (\delta w)_i - \delta \mathcal{L}_r(w_i)$.
Since $\mathrm{\Psi}(\textrm{H}_r) \rightarrow 0$ at the optima, minimizing it should output the desired Hessian $\textrm{H}_r$ w.r.t to the retain data. However, since we do not have access to $\mathcal{D}_r$, we can not explicitly compute $\delta \mathcal{L}_r(w_i)$. Instead, we can replace it with next best value which is $\delta \mathcal{L}_f(w_i)$. This is reasonable replacement since $\delta \mathcal{L}(w_i) \leq L\|w_i-w^\star\|$ where $L$ is the Lipschitz constant corresponding to the loss. As a result both $\delta \mathcal{L}_r(w_i)$ and $\delta \mathcal{L}_f(w_i)$ can be upper bounded by $L \|\delta w\| \rightarrow 0 $, for small perturbations. So with the small upper bound we can approximately say that both the quantities are very close to each other.\\
So we define an approximate version of $\mathrm{f}_i$ as follows:
$\Tilde{\mathrm{f}}_i(\textrm{H}_r) = \frac{1}{2} (\delta w)_i^\top \textrm{H}_r(w^\star) (\delta w)_i - \nabla_f(w^\star)^\top (\delta w)_i - \delta \mathcal{L}_f(w_i)$. 
Our final objective becomes:
\begin{align}
    \mathrm{\Tilde{\Psi}}(\textrm{H}_r) &= \frac{1}{m} \sum_{i=1}^m \left( \Tilde{\mathrm{f}}_i(\textrm{H}_r)\right)^2 \notag 
\end{align}
Clearly the $\textrm{H}_r$ is positive semi definite (PSD) for any convex loss functions.
Based on this observation, we formulate the following optimization as a Semi Definite Program (SDP) as follows:
\begin{mini}|l|
{}{\mathrm{\Tilde{\Psi}}(X)}{}{}
\addConstraint { \textrm{X}\succeq 0}
\label{opt:main_opt}
\end{mini}
Since we are approximating the value of $\delta \mathcal{L}_r(w_i)$ instead of using the actual ground truth value, we anticipate that the solution to optimization problem \cref{opt:main_opt} will be approximately close to the true retained Hessian $\textrm{H}_r$. In fact, we can bound the error between the true and estimated Hessian using the following lemma.
\begin{lemma}
    Consider choosing $\delta w \in \mathbb{R}^d$ where each element $\delta w(j)$ of is sampled from $\mathcal{N}(0,1)$. Assuming that the solution of the optimization \cref{opt:main_opt} converges to $\hat{\textrm{H}}_r$, then the frobenius norm of the difference between the Hessian $\textrm{H}_r$ (the actual ground truth Hessian with respect to $\mathcal{D}_r$) and $\hat{\textrm{H}_r}$ can be upper bounded as:
    \[ \|\Delta \textrm{H}_r\|_F \leq \frac{2 \epsilon \sqrt{d}}{(2+d)}\]
where $\epsilon$ is the upper bound on the approximation error of  $\delta \mathcal{L}_r(w_i)$. Mathematically:
    $|\delta \mathcal{L}_r(w_i) - \delta \mathcal{L}_f(w_i)| \leq \epsilon \ \forall i$
\label{lemma1}
\end{lemma}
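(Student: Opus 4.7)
The plan is to exploit the first-order optimality of $\hat H_r$ as a minimizer of the SDP in \cref{opt:main_opt} and to push the finite-sample stationarity condition into its $m\to\infty$ limit (the natural regime in which the ``converges to $\hat H_r$'' hypothesis makes sense). Combined with Isserlis' theorem for fourth Gaussian moments, this should collapse the problem to a short piece of matrix algebra that isolates $\Delta H_r := \hat H_r - H_r$ in closed form.

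First I would observe that since training is assumed to reach the global optimum, $f_i(H_r)=0$ at the true retain Hessian, so
\[\tilde f_i(\hat H_r) \;=\; \tfrac{1}{2}(\delta w)_i^\top \Delta H_r\,(\delta w)_i + \eta_i, \qquad \eta_i := \delta\mathcal L_r(w_i) - \delta\mathcal L_f(w_i),\]
with $|\eta_i|\le\epsilon$ by hypothesis. For small enough $\epsilon$, $\hat H_r$ lies in the interior of the PSD cone (since the $\ell_2$ regularization makes $H_r$ strictly positive definite), so the PSD constraint is inactive and stationarity reduces to $\tfrac{1}{m}\sum_i \tilde f_i(\hat H_r)(\delta w)_i(\delta w)_i^\top = 0$. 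Letting $m\to\infty$ and invoking the identity $\mathbb E[(x^\top A x)\,xx^\top] = \mathrm{tr}(A)\,I + 2A$ for symmetric $A$ with $x\sim\mathcal N(0,I_d)$, this becomes the matrix equation
\[\Delta H_r + \tfrac{1}{2}\mathrm{tr}(\Delta H_r)\,I + E \;=\; 0, \qquad E := \mathbb E[\eta\,xx^\top].\]

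Next I would solve in closed form. Taking the trace yields $\mathrm{tr}(\Delta H_r) = -\tfrac{2\,\mathrm{tr}(E)}{d+2}$, and back-substitution gives $\Delta H_r = -E + \tfrac{\mathrm{tr}(E)}{d+2}\,I$. A short Frobenius-norm expansion then produces
\[\|\Delta H_r\|_F^2 \;=\; \|E\|_F^2 - \frac{(d+4)\,(\mathrm{tr}\,E)^2}{(d+2)^2}.\]
Specializing to the tight scenario where $\eta$ is a constant of magnitude $\epsilon$ gives $E = \pm\epsilon I$, $\mathrm{tr}(E)=\pm\epsilon d$, $\|E\|_F^2=\epsilon^2 d$, and the expression collapses exactly to $\tfrac{4\epsilon^2 d}{(d+2)^2}$, which matches the claimed bound $\tfrac{2\epsilon\sqrt d}{d+2}$.

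The hard step will be the worst-case reduction over admissible $\eta$. The pointwise constraint $|\eta|\le\epsilon$ a priori allows $\eta$ to correlate strongly with the quadratic monomials $x_ix_j$, so $\|E\|_F^2$ alone can be as large as $\epsilon^2 d(d+2)$ by Cauchy--Schwarz; the subtracted trace-squared term is what has to compensate. To make this rigorous I would perform a Hermite-polynomial decomposition of $\eta(x)$ in $L^2(\mathcal N(0,I_d))$ and argue that the combination in the previous display is maximized by the zeroth Hermite mode, i.e.\ by $\eta\equiv\pm\epsilon$; this projection/orthogonality step is the only substantive piece beyond the elementary Gaussian moment calculus and matrix algebra described above.
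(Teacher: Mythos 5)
Your route is genuinely different from the paper's: the paper sandwiches $\tilde{\Psi}$ between the two trace polynomials $\frac{1}{2}\mathrm{trace}(\mathrm{M}^2\mp 2\epsilon \mathrm{M})+\frac{1}{4}\mathrm{trace}(\mathrm{M})^2$ and compares their minimizers, whereas you impose first-order stationarity in the population limit and solve for $\Delta \textrm{H}_r$ in closed form. Everything up to and including the identity $\|\Delta \textrm{H}_r\|_F^2=\|E\|_F^2-\frac{(d+4)(\mathrm{tr}\,E)^2}{(d+2)^2}$ is correct (the stationarity condition, the fourth-moment identity, the trace solve), and it is more informative than the paper's argument because it isolates exactly which functional of the error $\eta$ controls $\Delta \textrm{H}_r$.

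However, the proof cannot be completed from there, because the step you yourself flag as the only substantive remaining piece is false: $\eta\equiv\pm\epsilon$ does \emph{not} maximize $\|E\|_F^2-\frac{(d+4)(\mathrm{tr}\,E)^2}{(d+2)^2}$ over $\|\eta\|_\infty\le\epsilon$. Take $\eta(x)=\epsilon\,\mathrm{sign}(x_1x_2)$ (or $\epsilon\tanh(Kx_1x_2)$ if smoothness is wanted). Then $E_{12}=E_{21}=\epsilon\,\mathbb{E}|x_1x_2|=\tfrac{2\epsilon}{\pi}$ and every other entry of $E$ vanishes, so $\mathrm{tr}(E)=0$ and your closed form gives $\|\Delta \textrm{H}_r\|_F=\|E\|_F=\tfrac{2\sqrt{2}\,\epsilon}{\pi}\approx 0.90\,\epsilon$. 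This exceeds $\tfrac{2\epsilon\sqrt{d}}{2+d}$ for every $d\ge 2$ (the claimed bound is at most $\epsilon/\sqrt{2}$ and decays like $2\epsilon/\sqrt{d}$). In other words, the dangerous part of $\eta$ is its degree-two Hermite component, which feeds $\|E\|_F$ without feeding $\mathrm{tr}(E)$; the Cauchy--Schwarz ceiling $\|E\|_F\le\epsilon\sqrt{d+2}$ you mention is essentially the honest worst case, and the subtracted trace term cannot rescue it because $E$ may be taken traceless. This is not a pathological choice in the present setting: $\eta=\delta\mathcal{L}_r-\delta\mathcal{L}_f$ is itself approximately quadratic in $\delta w$ with Hessian $\textrm{H}_r-\textrm{H}_f$, i.e.\ exactly of the form that excites this mode. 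So your derivation, carried to completion, certifies only $\|\Delta \textrm{H}_r\|_F=O(\epsilon\sqrt{d})$ rather than $O(\epsilon/\sqrt{d})$; recovering the lemma's rate along this route would require an extra structural assumption (e.g.\ that $\eta$ is uncorrelated with the quadratic monomials of $\delta w$) that is not implied by the pointwise hypothesis $|\delta\mathcal{L}_r(w_i)-\delta\mathcal{L}_f(w_i)|\le\epsilon$.
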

\begin{proof}
    From the definition of $\Psi(\mathrm{H})$:
\begin{equation}
    \Psi(\mathrm{X}) = \underset{{\delta w \sim \mathcal{N}(\mathbf{0}, \mathbf{I})}}{\mathbb{E}}\left [ (\frac{1}{2}\delta w^\top \mathrm{X} \delta w + \nabla_r^\top \delta w - \delta \mathcal{L}_r)^2\right]
\label{eqn:new-obj}
\end{equation}
By neglecting higher order terms in the taylor approximation we can say, $\delta \mathcal{L}_r \approx \frac{1}{2}\delta w^\top H_r \delta w + \nabla_r^\top \delta w$. Substituting $\delta \mathcal{L}_r$ from Equation \ref{eqn:new-obj}:
\begin{align*}
    \Psi(\mathrm{X}) &=  \mathbb{E}_{\delta w \sim \mathcal{N}(\mathbf{0}, \mathbf{I})}\left [ (\frac{1}{2}\delta w^\top \mathrm{X} \delta w - \frac{1}{2}\delta w^\top \mathrm{H}_r \delta w )^2\right] \\
    &=  \mathbb{E}_{\delta w \sim \mathcal{N}(\mathbf{0}, \mathbf{I})}\left [ (\frac{1}{2}\delta w^\top(\mathrm{X} - \mathrm{H}_r)\delta w)^2\right] \\
    &=  \frac{1}{2}\mathrm{trace}(\mathrm{M}^2) + \frac{1}{4}\mathrm{trace}(\mathrm{M})^2
\end{align*}
where, we define $\mathrm{M} = (\mathrm{X} - \mathrm{H}_r)$.
So, clearly the minimizer of $\Psi(\mathrm{X})$ is at $\mathrm{M} = 0$ or $\mathrm{X} = \mathrm{H}_r$. However it is the ideal case, where we do not approximate $\delta \mathcal{L}_r$. In our algorithm, we are minimizing and approximate objective $\mathrm{\Tilde{\Psi}}(\textrm{X})$. 
Also from the definition we can say $\Tilde{\mathrm{f}}_i(X) = \mathrm{f}_i(\textrm{X})+(\delta \mathcal{L}_r(w_i)-\delta \mathcal{L}_f(w_i))$. Since we assume that $|\delta \mathcal{L}_r(w_i) - \delta \mathcal{L}_f(w_i)| \leq \epsilon \ \forall i$, we can derive the following inequality:
\begin{align*}
    & (\mathrm{f}_i(\textrm{X}) - \epsilon) \leq \Tilde{\mathrm{f}}_i(X) \leq (\mathrm{f}_i(\textrm{X}) + \epsilon) \\
\implies &  \frac{1}{m} \sum_{i=1}^m \left( \mathrm{f}_i(\textrm{X}-\epsilon)\right)^2 \leq \mathrm{\Tilde{\Psi}(X)} \leq \frac{1}{m} \sum_{i=1}^m \left( \mathrm{f}_i(\textrm{X} + \epsilon)\right)^2
\end{align*}
Using this definition and using the derived term for $ \Psi(\mathrm{X})$, we can derive the following:
\begin{align*}
    & \mathrm{\Tilde{\Psi}(X)} \leq \frac{1}{2}\mathrm{trace}(\mathrm{M}^2+ 2\epsilon \mathrm{M}) + \frac{1}{4}\mathrm{trace}(\mathrm{M})^2 \\
    &  \frac{1}{2}\mathrm{trace}(\mathrm{M}^2- 2\epsilon \mathrm{M}) + \frac{1}{4}\mathrm{trace}(\mathrm{M})^2 \leq \mathrm{\Tilde{\Psi}(X)}
\end{align*}
By seperately taking derivatives of the upper and lower bounds above, if we set it to $0$, we get the following bound on the minimizer $\mathrm{M}$ (\textbf{details in the supplementary)}.
\begin{equation*}
    -\frac{2\epsilon}{(2+d)}\mathrm{I}_d \leq \mathrm{M} \leq \frac{2\epsilon}{(2+d)}\mathrm{I}_d
\end{equation*}
where, $\mathrm{I}_d \in \mathbb{R}^{d \times d}$ is the identity matrix. This inequality implies that the if the solution of optimization~\ref{opt:main_opt} is $\hat{\textrm{H}_r}$, then 
\begin{equation*}
    \textrm{H}_r-\frac{2\epsilon}{(2+d)}\mathrm{I}_d \leq \hat{\textrm{H}}_r \leq \textrm{H}_r+\frac{2\epsilon}{(2+d)}\mathrm{I}_d
\end{equation*}
As a result we can conclude:
\begin{equation*}
   \|\hat{\textrm{H}}_r - \textrm{H}_r\| = \| \Delta \textrm{H}_r\|_F \leq \frac{2 \epsilon \|\mathrm{I}_d\|_F}{(2+d)}
\end{equation*}
Since $\|\mathrm{I}_d\|_F = \sqrt{d}$, we conclude the proof.
\end{proof}
\noindent \textbf{Implications of \cref{lemma1}:} The lemma provides an upper bound on the Frobenius norm between the true and estimated Hessians, characterized by two parameters: $\epsilon$ and $d$. When $\epsilon$ is smaller, the upper bound decreases, leading to a better approximation of the retain Hessian. This is intuitive, as $\epsilon$ represents the approximation error of the loss difference. Additionally, increasing the feature dimension $d$ causes the upper bound to approach zero. Specifically, the norm decreases inversely with $\sqrt{d}$.
In other words, as the matrix size grows, the upper bound on the difference becomes smaller. Importantly, we do not make any assumptions about the linearity of the model in proving this lemma. The bound suggests that for high-dimensional scenarios (such as in deep models), estimating the Hessian accurately may be feasible, making this method worth exploring. However, storing and inverting a large Hessian is computationally impractical. Fortunately, various methods exist for Hessian approximation, such as diagonalization \cite{yao2021adahessian} or linearizing deep models using Hessian-vector products \cite{golatkar2021mixed}. Integrating these approximation techniques with our approach could open up promising avenues for future research.
\begin{theorem}
\label{theorem1}
Suppose that $\forall (x_i, y_i) \in \mathcal{D}$, $w \in \mathbb{R}^d$: $\|\nabla \ell (w^\top x_i, y_i)\| \leq C$. Suppose that the second derivative of $\ell$ is $\gamma$-lipschitz and $\|x_i\|_2 \leq 1$ for all $(x_i, y_i) \in \mathcal{D}$, and the result of optimization \cref{opt:main_opt} is $\hat{H}$. Then:
\begin{align*}
\|\nabla \mathcal{L}(w_{uf}, \mathcal{D}_r)\|_2 &\leq \gamma (n - n_f) \|{\hat{\textrm{H}}_r}^{-1} \nabla_f\|_2^2 \notag \\
&\leq \frac{4 \gamma C^2 n_f^2 (n - n_f)}{\left[\lambda(n - n_f) - \frac{2 \epsilon}{(2+d)}\right]^2}
\end{align*}
where $n$ and $n_f$ denote the number of total training samples and the number of samples to be forgotten respectively.
\end{theorem}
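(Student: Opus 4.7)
My plan is to adapt the standard Newton-step certified-removal argument so that Lemma~\ref{lemma1} controls the only new source of slack, namely the gap between $\hat{H}_r$ and the true retain Hessian $H_r$. I would begin by applying the mean value theorem to $\nabla\mathcal{L}(\cdot,\mathcal{D}_r)$ along the segment from $w^\star$ to $w_{uf}$, giving
\[
\nabla\mathcal{L}(w_{uf},\mathcal{D}_r)=\nabla\mathcal{L}(w^\star,\mathcal{D}_r)+H_r(\xi)(w_{uf}-w^\star)
\]
for some $\xi$ on that segment. Global optimality of $w^\star$ on the full training set gives $\nabla\mathcal{L}(w^\star,\mathcal{D})=0$, hence $\nabla\mathcal{L}(w^\star,\mathcal{D}_r)=-\nabla_f$. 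Substituting the unlearning update $w_{uf}-w^\star=\hat{H}_r^{-1}\nabla_f$ from \cref{unlearn_exact} (setting the Gaussian noise term aside, as it is irrelevant for this deterministic norm bound) and rewriting $\nabla_f=\hat{H}_r\hat{H}_r^{-1}\nabla_f$ yields the key identity
\[
\nabla\mathcal{L}(w_{uf},\mathcal{D}_r)=\bigl[H_r(\xi)-\hat{H}_r\bigr]\hat{H}_r^{-1}\nabla_f.
\]

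To obtain the first inequality I would bound $\|H_r(\xi)-\hat{H}_r\|_2$ via the $\gamma$-Lipschitz assumption on $\ell''$ combined with $\|x_i\|_2\leq 1$: each per-sample Hessian is $\gamma$-Lipschitz in $w$ and the regularizer is constant, so $H_r$ is $\gamma(n-n_f)$-Lipschitz. Using $\|\xi-w^\star\|_2\leq\|w_{uf}-w^\star\|_2=\|\hat{H}_r^{-1}\nabla_f\|_2$ together with operator-norm submultiplicativity then produces $\|\nabla\mathcal{L}(w_{uf},\mathcal{D}_r)\|_2\leq\gamma(n-n_f)\|\hat{H}_r^{-1}\nabla_f\|_2^{2}$, which is the first claimed bound.

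For the second inequality I would control $\|\hat{H}_r^{-1}\|$ and $\|\nabla_f\|$ separately. The elementwise sandwich $H_r-\tfrac{2\epsilon}{2+d}I\preceq\hat{H}_r\preceq H_r+\tfrac{2\epsilon}{2+d}I$ proved inside Lemma~\ref{lemma1}, combined with $H_r\succeq\lambda(n-n_f)I$ from the $\ell_2$ regularizer, yields $\|\hat{H}_r^{-1}\|\leq 1/[\lambda(n-n_f)-\tfrac{2\epsilon}{2+d}]$, which is exactly the denominator in the target bound. For $\|\nabla_f\|_2$, the per-sample bound $\|\nabla\ell(w^{\star\top}x_i,y_i)x_i\|\leq C$ gives $Cn_f$ for the data part; the full-sum optimality condition $\lambda n w^\star=-\sum_i\nabla\ell\cdot x_i$ implies $\|\lambda n w^\star\|\leq Cn$, so the regularizer contribution to $\nabla_f$ satisfies $\|\lambda n_f w^\star\|\leq Cn_f$, and the triangle inequality gives $\|\nabla_f\|_2\leq 2Cn_f$. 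Squaring produces the $4C^{2}n_f^{2}$ numerator.

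The main obstacle is the Lipschitz step in the second paragraph. Strictly, $H_r(\xi)-\hat{H}_r=[H_r(\xi)-H_r(w^\star)]+[H_r(w^\star)-\hat{H}_r]$, and a bare triangle inequality would produce an additive term from the Hessian-estimation error of Lemma~\ref{lemma1} that does not appear in the stated bound. To land on the clean form written in the theorem, I expect to either fold the estimation error into the effective Lipschitz term (arguing that the Taylor-remainder piece dominates in the regime where $\epsilon$ is small relative to $\gamma(n-n_f)\|\hat{H}_r^{-1}\nabla_f\|$), or refine the mean-value representation so that $\hat{H}_r$ itself is the natural linearization point and the Lipschitz bound applies to a single matrix difference; achieving this merger without a second additive term is the non-routine step of the proof.
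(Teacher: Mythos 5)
Your route is the same one the paper takes: the Newton-residual argument of \cite{guo2019certified} for the first inequality, and a spectral lower bound on $\hat{\textrm{H}}_r$ from \cref{lemma1} together with $\|\nabla_f\|_2\le 2Cn_f$ for the second. Your handling of the second inequality is correct and in fact cleaner than the paper's: the paper writes $\|\textrm{H}_r\|_2\le\|\hat{\textrm{H}}_r\|_2+\|\Delta\textrm{H}_r\|_2$ and then inverts, which conflates the largest and smallest eigenvalues, whereas your Loewner sandwich $\textrm{H}_r-\tfrac{2\epsilon}{2+d}\mathrm{I}\preceq\hat{\textrm{H}}_r$ combined with $\textrm{H}_r\succeq\lambda(n-n_f)\mathrm{I}$ gives $\lambda_{\min}(\hat{\textrm{H}}_r)\ge\lambda(n-n_f)-\tfrac{2\epsilon}{2+d}$ directly, which is what the denominator actually requires. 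Your accounting of the regularizer's contribution to $\|\nabla_f\|_2\le 2Cn_f$ is also more explicit than the paper's, which simply imports that bound from Theorem~4 of \cite{guo2019certified}.

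The obstacle you flag in your final paragraph is genuine, and you should know the paper does not resolve it either: its proof quotes the bound $\|\nabla\mathcal{L}(w_{uf},\mathcal{D}_r)\|_2\le\gamma(n-n_f)\|\textrm{H}_r^{-1}\nabla_f\|_2^2$, which holds for the Newton step taken with the \emph{exact} retain Hessian, and then substitutes $\hat{\textrm{H}}_r$ for $\textrm{H}_r$ without comment. As your mean-value identity shows, with the approximate Hessian the first-order cancellation is imperfect and one obtains $\nabla\mathcal{L}(w_{uf},\mathcal{D}_r)=(\textrm{H}_r-\hat{\textrm{H}}_r)\hat{\textrm{H}}_r^{-1}\nabla_f+R$ with $\|R\|_2\le\gamma(n-n_f)\|\hat{\textrm{H}}_r^{-1}\nabla_f\|_2^2$; the cross term is controlled by \cref{lemma1} as at most $\tfrac{2\epsilon\sqrt{d}}{2+d}\|\hat{\textrm{H}}_r^{-1}\nabla_f\|_2$, which is \emph{linear} rather than quadratic in $\|\hat{\textrm{H}}_r^{-1}\nabla_f\|_2$ and therefore cannot be folded into the stated quadratic bound in general (it dominates precisely when the Newton step is small). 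Neither of your two proposed repairs eliminates it without an extra assumption; the rigorous form of the first inequality carries this additive term, vanishing only as $\epsilon\to 0$. In short, your proposal is as complete as the paper's own argument and more transparent about where the remaining slack lives.
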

\begin{proof}
    This proof is inspired by \cite{guo2019certified}, and based on \textbf{Theorem 4} of the paper. This bound says that upon forgetting $n_f$ samples from the dataset, if the resulting model become $w_{uf}$ then the norm of the gradient with respect to this model on the remaining dataset can be upper bounded as follows:
    \[
    \|\nabla \mathcal{L}(w_{uf}, \mathcal{D}_r)\|_2 \leq \gamma (n-n_f) \|\mathrm{H}_r^{-1} \nabla_f\|_2^2
    \]
Now this $\mathrm{H}$ is the actual retain hessian while our estimate is $\hat{\textrm{H}}_r = \textrm{H}_r+ \Delta \textrm{H}_r$.
From the definition of loss $\mathcal{L}$, we know that after removing $n_f$ samples the loss becomes $\lambda(n-n_f)$-strongly convex. As a result we get $\|\textrm{H}_r\|_2 \geq \lambda(n-n_f)$. Now we can apply triangle inequality and the upper bound from Lemma \ref{lemma1} as follows:
\begin{align*}
\lambda(n - n_f) &\leq \|\textrm{H}_r\|_2 \leq \|\hat{\textrm{H}}_r\|_2 + \|\Delta \textrm{H}_r\|_2 \notag \\
&\leq \|\hat{\textrm{H}}_r\|_2 + 
\left\|\frac{2\epsilon}{(2+d)} \mathrm{I}_d\right\|_2 \\
& = \|\hat{\textrm{H}}_r\|_2 + \frac{2\epsilon}{(2+d)}
\end{align*}
which implies,
\begin{align*}
& \|\hat{\textrm{H}}_r\|_2 \geq \lambda(n - n_f) - \frac{2\epsilon}{(2+d)} \notag \\
\implies &\|\hat{\textrm{H}}_r\|_2^{-1} \leq \frac{1}{\lambda(n - n_f) - \frac{2\epsilon}{(2+d)}}
\end{align*}
Also, from \textbf{Theorem 4} of \cite{guo2019certified}, we know $\|\nabla_f\| \leq 2C n_f$.
So,
\[
\|\hat{\textrm{H}}_r^{-1} \nabla_f\|_2^2 \leq \|\hat{\textrm{H}}_r^{-1}\|_2^2 \|\nabla_f\|_2^2 \leq \frac{4C^2 n_f^2}{(\lambda(n-n_f) - \frac{2\epsilon}{(2+d)})^2}
\]
\[
\implies \|\nabla \mathcal{L}(w_{uf}, \mathcal{D}_r)\|_2 \leq \frac{4\gamma C^2 n_f^2(n-n_f)}{(\lambda(n-n_f) - \frac{2\epsilon}{(2+d)})^2}
\]
Hence we conclude the proof of Theorem \ref{theorem1}.
\end{proof}
\noindent \textbf{Implications of \cref{theorem1}:} The leftmost term in the theorem's inequality essentially represents the norm of the gradient with respect to the unlearned model on the remaining data. Successful unlearning, as suggested by the parameter indistinguishability (see \cref{param_indis}), should lead this norm to approach \(0\). Examining the upper bound, we observe that for a fixed size \(d\) of the Hessian, it becomes tighter with an increase in the number of remaining data, as the term is inversely proportional to the remaining data size. We validate this phenomenon through experimentation, where we observe a decline in unlearning performance as the number of samples to be forgotten increases. Additionally, the upper bound decreases as we significantly increase the Hessian dimension \(d\). This finding aligns with \cref{lemma1}, which asserts that for large \(d\), the estimated and true Hessians closely resemble each other, indicating effective unlearning.

\section{Experiments}
\label{sec:experiment}

\noindent \textbf{Datasets.} To demonstrate the efficacy of our proposed algorithms for the source-free unlearning scenario, we use four standard benchmark classification datasets: CIFAR-10 \cite{krizhevsky2009learning}, CIFAR-100 \cite{krizhevsky2009learning}, Stanford Dogs \cite{KhoslaYaoJayadevaprakashFeiFei_FGVC2011}, and CalTech-256 \cite{griffin2007caltech}. CIFAR-10 is a dataset consisting of 60,000 RGB images in 10 different classes, with 6,000 images per class. CIFAR-100 is similar to CIFAR-10, but with 100 classes containing 600 images each, providing a more granular classification challenge. Stanford Dogs contains 20,580 images of 120 different breeds of dogs, making it ideal for fine-grained classification tasks. CalTech-256 comprises 30,607 images across 256 object categories, offering a diverse set of images for comprehensive object recognition research. \label{sec:datasets} 

\noindent \textbf{Implementation details.} Since we perform zero-shot unlearning for linear classifiers, we use a ResNet-18 \cite{he2016deep} architecture pre-trained on ImageNet \cite{deng2009imagenet} as our feature extractor. Using these features, we train a linear classifier and then discard the data. During unlearning, we only use the trained linear model and the data to be forgotten. For the experiments with neural networks we used Mixed-Linear neural network \cite{golatkar2021mixed} and we linearized the last few layers. We used ResNet-18 \cite{he2016deep} and the datasets given above for these experiments. We randomly sample up to \(10\%\) of the training data as the forget data. All experiments were performed on a single NVIDIA-RTX 3090 GPU\@. The implementation can be found in \url{https://github.com/UCR-Vision-and-Learning-Group/mixed-linear-forgetting}.

\noindent \textbf{Baseline metrics.} The main baseline for any Machine Unlearning (MU) methods is the parameter indistinguishability between the retrained and unlearned models. A successful unlearning algorithm should emulate the performance of a model that was never exposed to the forget data, having been trained solely on the remaining data. In this context, we evaluate the classification accuracy of the model on the following datasets: (i) test data, (ii) remaining training data, and (iii) forget data. Additionally, we assess the Membership Inference Attack (MIA) score of the models, as proposed in the prior work \cite{kurmanji2024towards}. This score indicates whether a sample was originally part of the training set. After forgetting certain samples, we check their MIA scores using the unlearned model. An MIA score close to 50\% signifies successful unlearning, as it indicates that the unlearned model cannot distinguish whether the forget data came from the training distribution or the test distribution. 

\noindent \textbf{Baseline models.} Unlearned models using our proposed algorithm are compared with two types of models: (a) A model retrained from scratch using the remaining training data (Retrained) and (b) An unlearned model that has been unlearned using the exact Hessian computed from the remaining data (Unlearned(+)). Since we estimate the retain Hessian without accessing the remaining data, our primary objective is to closely mimic the performance of the model described in (a) (marked with cyan color for all the tables) using the baseline metrics. Since we do not need the training data during unlearning we refer the unlearned model using \textit{our proposed algorithm as \textbf{(Unlearned(-))}}. We explore these model's performances using both linear and mixed-linear classifiers for all the datasets.

\subsection{Comparison of baseline metrics on different datasets using linear classifiers}
\begin{figure*}[htbp]
  \centering
  \subfloat[CIFAR-10\label{fig:image1}]{
    \includegraphics[width=0.38\linewidth]{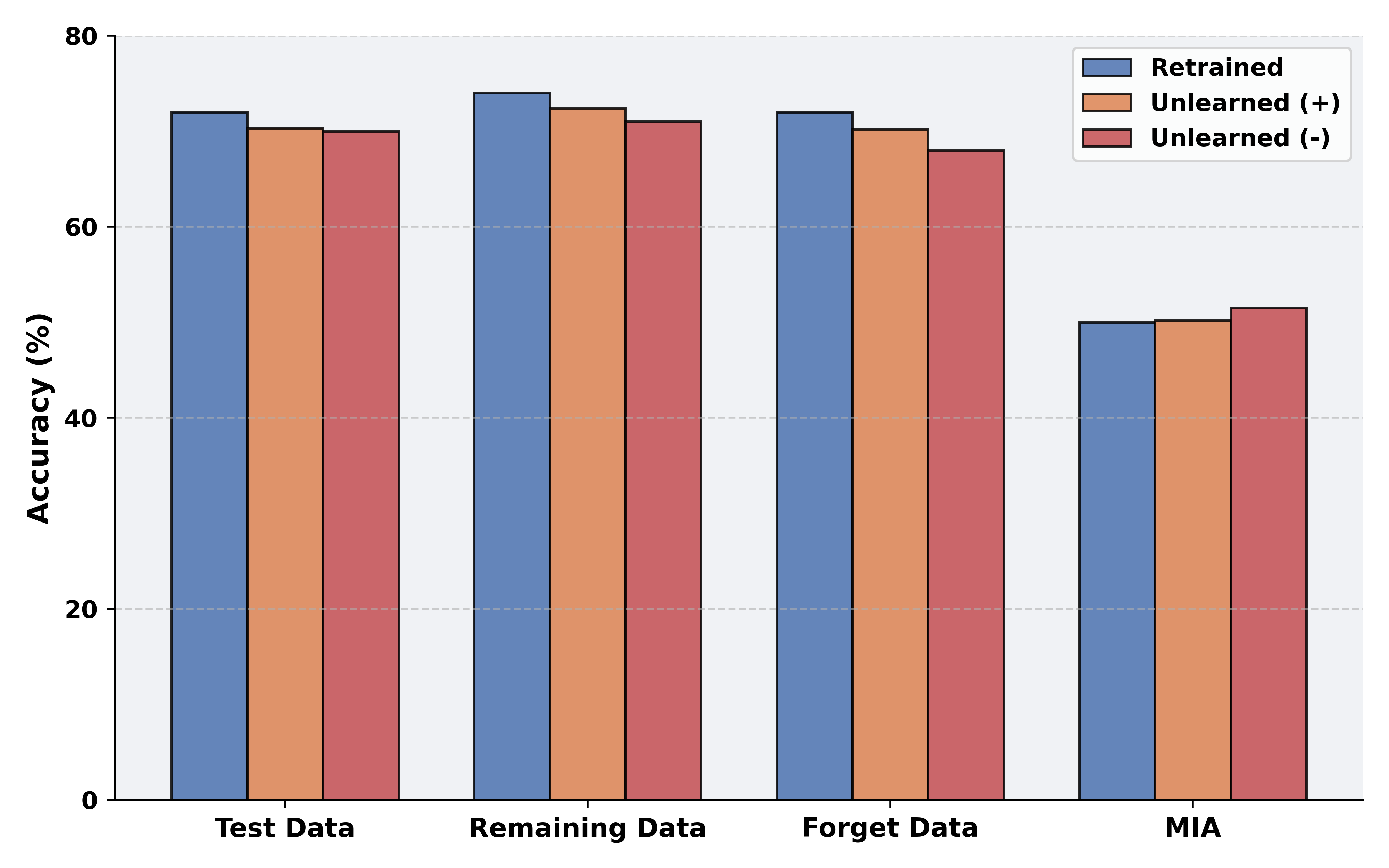}
  }
  \hspace{0.06\linewidth}
  \subfloat[CIFAR-100\label{fig:image2}]{
    \includegraphics[width=0.38\linewidth]{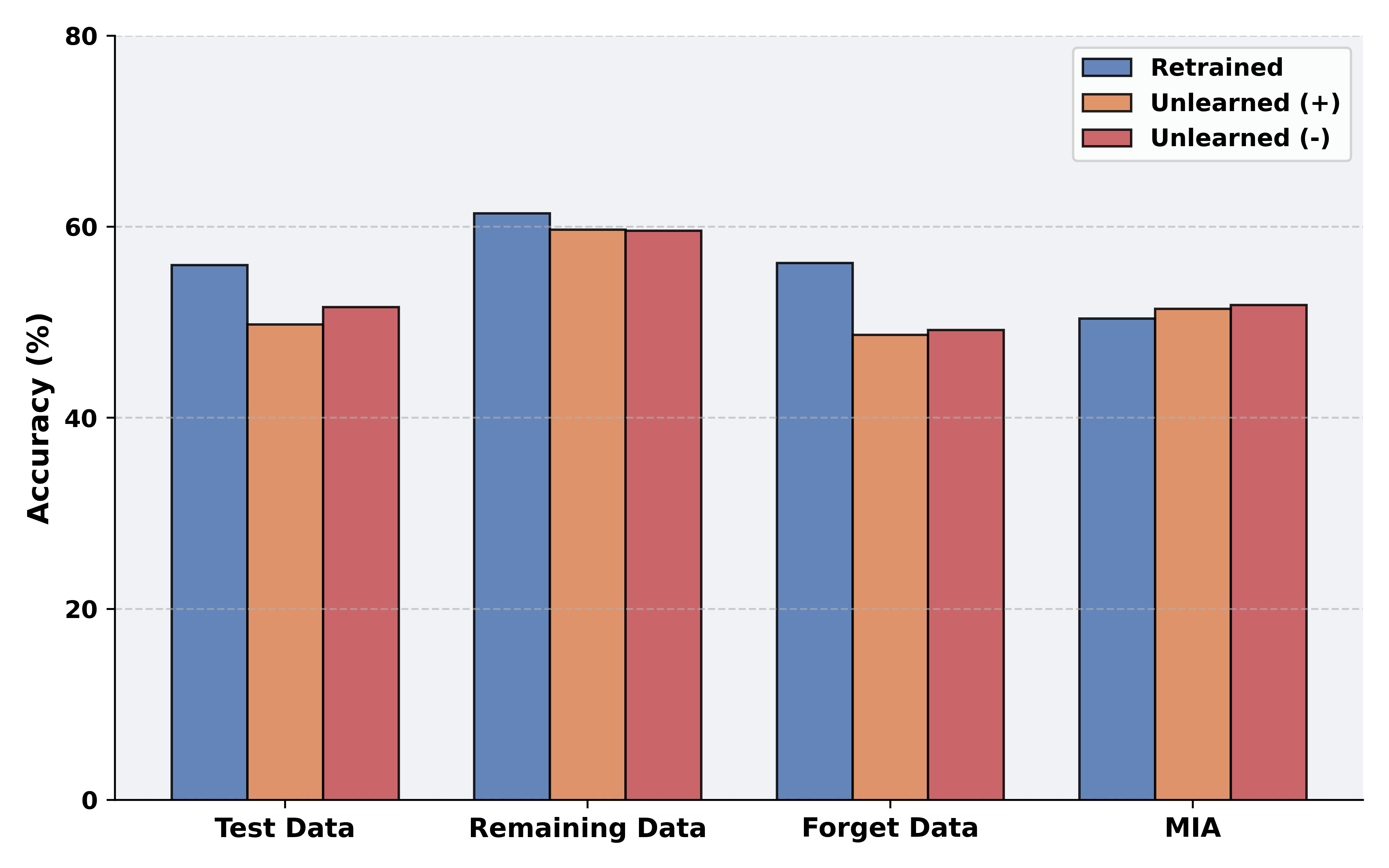}
  }
  \hspace{0.06\linewidth}
  \subfloat[StanfordDogs\label{fig:image3}]{
    \includegraphics[width=0.38\linewidth]{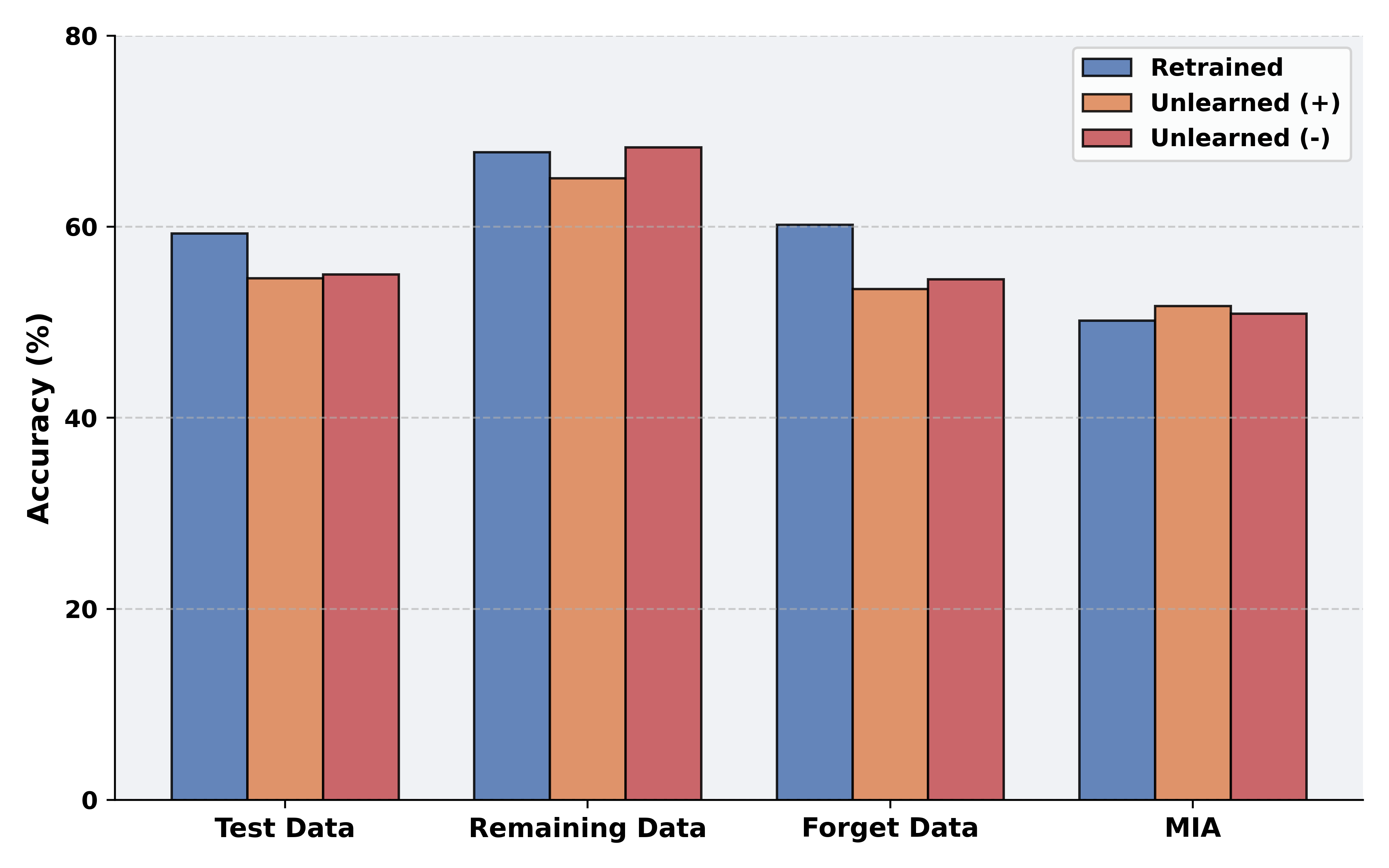}
  }
  \hspace{0.06\linewidth}
  \subfloat[Caltech256\label{fig:image4}]{
    \includegraphics[width=0.38\linewidth]{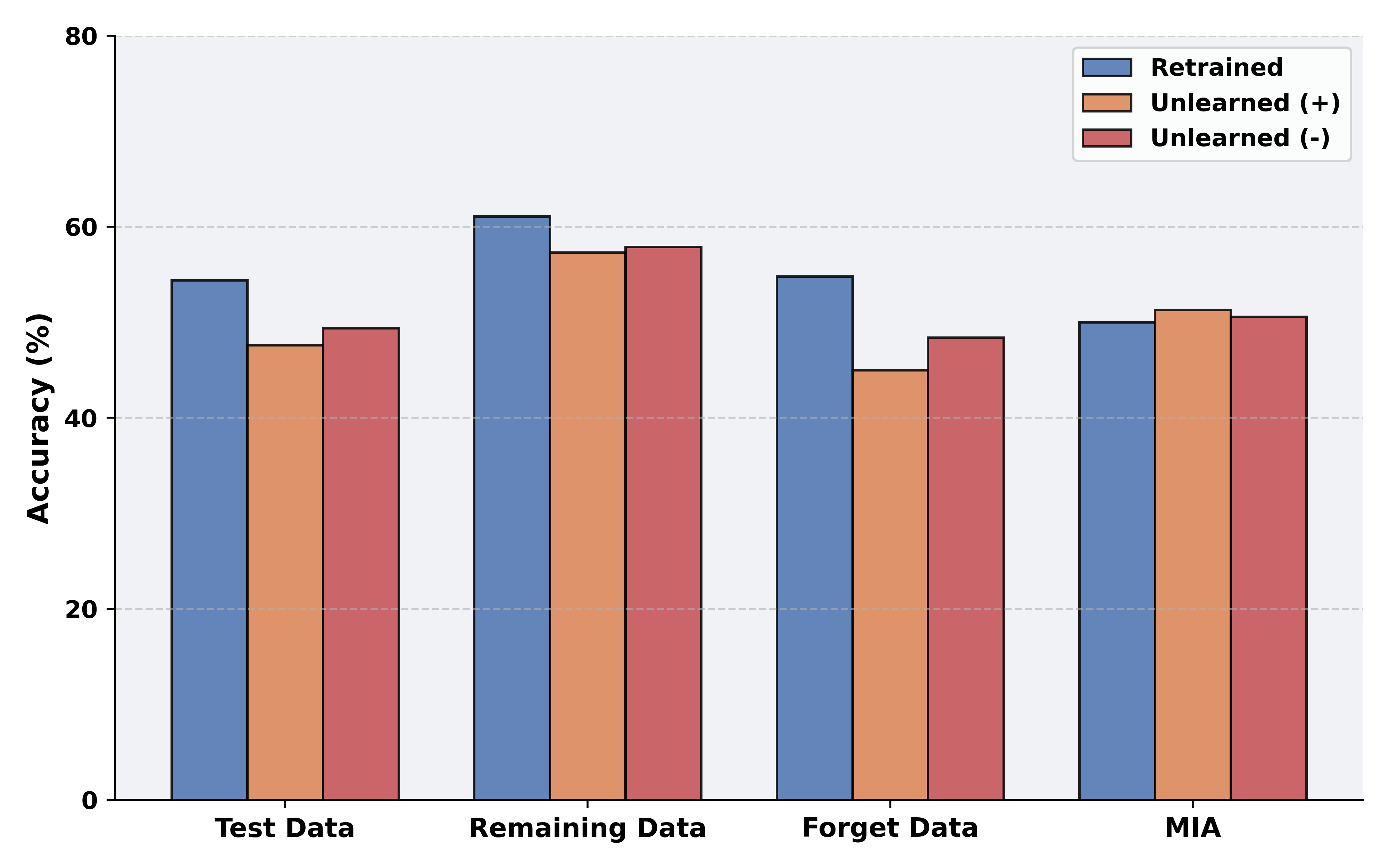}
  }
  \caption{Performance comparison of the proposed methods across different datasets: CIFAR-10, CIFAR-100, StanfordDogs, and Caltech256. We randomly select 10\% of the entire training data as forget samples. Each figure illustrates the effectiveness of the optimization strategies in handling the forgetting of samples, as evidenced by the close performance of models \textit{Unlearned(+)} and \textit{Unlearned(-)}. As expected, our method also yields results comparable to retraining from scratch (\textit{Retrained}), suggesting successful unlearning.}
  \label{fig:method-comparison}
\end{figure*}
We compare the performance of the Retrained, Unlearned(+), and Unlearned(-) models on all four datasets (\cref{fig:method-comparison}) selecting 10\% of the training samples as forget data. We use linear classifier and quadratic loss as the convex loss function for all cases. The results are presented as bar plots for all these scenarios. As theoretically expected, the performance of Unlearned(+) closely mimics that of the Retrained model. As per the main results, in all cases, the performance of Unlearned(-) closely matches that of the Unlearned(+) model, which aligns with our theoretical bounds.
\subsection{Effects of percentage of the forget data size}
To investigate the influence of forget data size, we vary the proportion of randomly selected data for forgetting within the training set while maintaining consistency across all other factors. According to \cref{theorem1}, it becomes apparent that the quantity of forgotten samples significantly influences the optimization process. As we can see in \cref{tab:cifar10-effect-of-forget-data-size}, increasing the number of forget samples negatively impacts performance. Specifically, when 5\% of the training data is chosen randomly for forgetting, the disparity between the retrained model with the remaining data and the model updated using our approach becomes negligible. However, with an increase in the percentage of forget data, the gap between these two models widens considerably. This result perfectly matches the bound we provide in \cref{theorem1}. 

\captionsetup[table]{skip=10pt}
\begin{table}[ht]
  \centering
  \renewcommand{\arraystretch}{0.9} 
  \scriptsize 
  \setlength{\tabcolsep}{6pt} 
  \rowcolors{3}{gray!25}{white}
  \begin{tabular}{l c c c c c}
    \toprule
    \rowcolor{gray!50}
    \textbf{Method} && Test & Remaining & Forget & MIA \\
    \midrule
    \rowcolor{cyan!15}
    Retrained && 73\% & 75\% & 73\% & 50\% \\
    Unlearned (-) & 15\% & 59\% & 60\% & 59\% & 55.8\% \\
    Performance Gap && \textbf{14\%} & \textbf{15\%} & \textbf{14\%} & \textbf{5.8\%} \\
    \midrule
    \rowcolor{cyan!15}
    Retrained && 72\% & 74\% & 72\% & 50\% \\
    Unlearned (-) & 10\% & 70\% & 71\% & 68\% & 51.4\% \\
    Performance Gap && \textbf{2\%} & \textbf{3\%} & \textbf{4\%} & \textbf{1.4\%} \\
    \midrule
    \rowcolor{cyan!15}
    Retrained && 73\% & 74\% & 73\% & 49.4\% \\
    Unlearned (-) & 5\% & 73\% & 74\% & 73\% & 49.4\% \\
    Performance Gap && \textbf{0\%} & \textbf{0\%} & \textbf{0\%} & \textbf{0\%} \\
    \bottomrule
  \end{tabular}
  \caption{The effect of the proportion of randomly selected data from the CIFAR-10 training dataset for forgetting. As the number of forget data samples increases, the difference in performance between the Retrained and Unlearned(-) also increases. The second column indicates the percentage of the selected forgetting data.}
  \label{tab:cifar10-effect-of-forget-data-size}
\end{table} 

\subsection{Effects of the number of perturbations}
For this experiment, we perform unlearning with linear classifiers by varying the number of perturbations. As demonstrated in the proof of \cref{lemma1}, the convergence of our optimization toward the true retain Hessian is strongly influenced by the objective function, which is formulated as the expectation of a perturbed random variable. Consequently, to better capture the expected value, a larger number of perturbations is required, which, in turn, positively impacts unlearning performance. This effect is clearly illustrated in Table \ref{tab:cifar10-effect-of-perturb}, where we observe that a higher number of perturbations consistently leads to improved outcomes.

\begin{table}[ht]
  \centering
  \renewcommand{\arraystretch}{0.9} 
  \scriptsize 
  \setlength{\tabcolsep}{6pt} 
  \rowcolors{3}{gray!25}{white}
  \begin{tabular}{l c c c c c}
    \toprule
    \rowcolor{gray!50}
    \textbf{Method} && Test & Remaining & Forget & MIA \\
    \midrule
    \rowcolor{cyan!15}
    Retrained && 72\% & 74\% & 72\% & 50\% \\
    \midrule
    Unlearned (-) & 250 & 57\% & 58\% & 57\% & 56.2\% \\
    Performance Gap && \textbf{15\%} & \textbf{16\%} & \textbf{15\%} & \textbf{6.2\%} \\
    \midrule
    Unlearned (-) & 500 & 70\% & 71\% & 68\% & 51.4\% \\
    Performance Gap && \textbf{2\%} & \textbf{3\%} & \textbf{4\%} & \textbf{1.4\%} \\
    \midrule
    Unlearned (-) & 1000 & 72\% & 74\% & 71\% & 49\% \\
    Performance Gap && \textbf{0\%} & \textbf{0\%} & \textbf{1\%} & \textbf{1\%} \\
    \bottomrule
  \end{tabular}
  \caption{The effect of the number of perturbations randomly selected from a Gaussian distribution for the CIFAR-10 dataset. The second column indicates the number of perturbations used to approximate the Hessian using our method. Increasing the number of perturbations positively influences unlearning performance.}
  \label{tab:cifar10-effect-of-perturb}
\end{table}

\subsection{Effects of the L2 regularization}
\begin{table}[ht]
  \centering
  \renewcommand{\arraystretch}{0.9} 
  \scriptsize 
  \setlength{\tabcolsep}{6pt} 
  \rowcolors{3}{gray!25}{white}
  \begin{tabular}{l c c c c c}
    \toprule
    \rowcolor{gray!50}
    \textbf{Method} && Test & Remaining & Forget & MIA \\
    \midrule
    \rowcolor{cyan!15}
    Retrained && 72\% & 74\% & 72\% & 50\% \\
    \midrule
    Unlearned (-) & 0 & 70\% & 71\% & 68\% & 51.4\% \\
    Performance Gap && \textbf{2\%} & \textbf{3\%} & \textbf{4\%} & \textbf{1.4\%} \\
    \midrule
    Unlearned (-) & 0.0005 & 71\% & 72\% & 71\% & 49.8\% \\
    Performance Gap && \textbf{1\%} & \textbf{2\%} & \textbf{1\%} & \textbf{0.2\%} \\
    \midrule
    Unlearned (-) & 0.001 & 72\% & 73\% & 72\% & 50.9\% \\
    Performance Gap && \textbf{0\%} & \textbf{1\%} & \textbf{0\%} & \textbf{0.9\%} \\
    \bottomrule
  \end{tabular}
  \caption{The impact of the regularization parameter $\lambda$ on the unlearning algorithm. Increasing $\lambda$ leads to improved unlearning performance, consistent with our claim in Theorem~\ref{theorem1}.}
  \label{tab:cifar10-effect-of-l2}
\end{table}

\noindent The theoretical upper bound on the norm in \cref{theorem1} is clearly proportional to $\frac{1}{\lambda^2}$, with $\lambda$ representing the regularization parameter. Consequently, as demonstrated in \cref{tab:cifar10-effect-of-l2}, increasing $\lambda$ leads to a reduction in the performance gap between our unlearned model and the retrained model.
\subsection{Experiments on mixed linear networks}
To show the applicability of our approach to the neural networks, we consider using mixed linear networks for unlearning \cite{golatkar2021mixed}. We select the last few layers of ResNet-18 to linearize and train it with CIFAR-100. We randomly select 10\%, 15\%, and 20\% of data to be forgotten from the trained model and apply our optimization function to approximate the remaining data Hessian at the trained model. The results are listed in \cref{tab:mixed-linear-exp}. As can be seen, our method can approximate the exact remaining hessian and can perform significantly well even without having the remaining data. We also conduct experiments on other datasets mentioned in \cref{sec:datasets} [Datasets](See supplementary).

\captionsetup[table]{skip=10pt}
\begin{table}[ht]
  \centering
  \renewcommand{\arraystretch}{0.9} 
  \scriptsize 
  \setlength{\tabcolsep}{6pt} 
  \rowcolors{3}{gray!25}{white}
  \begin{tabular}{l c c c c c}
    \toprule
    \rowcolor{gray!50}
    \textbf{Method} && Test & Remaining & Forget & MIA \\
    \midrule
    \rowcolor{cyan!15}
    Retrained && 62.2\% & 65.7\% & 62.6\% & 50\% \\
    Unlearned (+) & 20\% & 60.1\% & 63.4\% & 59.3\% & 50.4\% \\
    Unlearned (-) && 60\% & 70\% & 63.2\% & 50.8\% \\
    Performance Gap && \textbf{0.1\%} & \textbf{6.6\%} & \textbf{3.9\%} & \textbf{0.4\%} \\
    \midrule
    \rowcolor{cyan!15}
    Retrained && 63.1\% & 68.9\% & 63.3\% & 50.2\% \\
    Unlearned (+) & 15\% & 61.9\% & 67.9\% & 62.2\% & 50.7\% \\
    Unlearned (-) && 61.4\% & 70.1\% & 62.2\% & 51.7\% \\
    Performance Gap && \textbf{0.5\%} & \textbf{2.2\%} & \textbf{0.0\%} & \textbf{1.0\%} \\
    \midrule
    \rowcolor{cyan!15}
    Retrained && 63.7\% & 72\% & 63.9\% & 50\% \\
    Unlearned (+) & 10\% & 63.3\% & 72.5\% & 64.9\% & 51\% \\
    Unlearned (-) && 62.8\% & 70.1\% & 61.2\% & 52.3\% \\
    Performance Gap && \textbf{0.5\%} & \textbf{2.4\%} & \textbf{3.7\%} & \textbf{1.3\%} \\
    \bottomrule
  \end{tabular} 
  \caption{We demonstrate the applicability of our method to neural networks using mixed linear networks. The method performs considerably well even without access to remaining data. Experiments were conducted on the CIFAR-100 dataset with a ResNet-18 model, linearizing the last few layers.}
  \label{tab:mixed-linear-exp}
\end{table}

\subsection{Comparison with other source-free unlearning approaches}

We compared our method with other unlearning approaches closely related to our work. For these experiments, we utilized the last layer of the ResNet-18 architecture as a linear classifier on the CIFAR-10 dataset, randomly selecting 10\% of the data to be forgotten across all tests. As shown in \cref{tab:source-free}, our approach outperforms existing source-free methods significantly in terms of performance on remaining, forget, and test data. Specifically, \textbf{NegGrad} \cite{golatkar2020eternal} fine-tunes using only the forget data by applying gradient descent to increase the loss for forget data. The \textbf{Random Labels} \cite{golatkar2020eternal} method randomly reassigns labels (excluding the actual class) to forget samples and fine-tunes the model with this modified data. \textbf{JiT} \cite{foster2024zero} employs Lipschitz regularization to maintain stable model outputs across perturbed data samples for unlearning. The \textbf{Adversarial} \cite{cha2024learning} approach combines adversarial examples with weight importance metrics to preserve performance on remaining data while applying gradient ascent on the forget data. 


\begin{table}[H]
  \renewcommand{\arraystretch}{1} 
  \small 
  \resizebox{\columnwidth}{!}{ 
    \rowcolors{3}{gray!25}{white}
    \begin{tabular}{l c c c c}
      \toprule
      \rowcolor{gray!50}
      \textbf{Method} & \textbf{Test Data} & \textbf{Remaining Data} & \textbf{Forget Data} & \textbf{MIA} \\
      \midrule
      \rowcolor{cyan!15} 
      Retrained & 72\% & 74\% & 72\% & 50\% \\
      NegGrad & 51.9\% & 53.2\% & 51.2\% & 48\% \\
      Random Labels & 20.6\% & 21.6\% & 21.4\% & 47\% \\
      JiT & 52.1\% & 53.1\% & 51.1\% & 49.1\% \\
      Adversarial & 51.5\% & 52.7\% & 51.0\% & 50.0\% \\
      Unlearned (-) & \textbf{70\%} & \textbf{71\%} & \textbf{68\%} & \textbf{51.4\%} \\
      \bottomrule
    \end{tabular}
  }
  \caption{Comparison of existing source-free unlearning methods with our proposed method (Unlearned (-)). Our method significantly outperforms others. Experiments are conducted on the CIFAR-10 dataset using a linear classifier.}
  \label{tab:source-free}
\end{table}

\section{Conclusion}
\label{sec:conclusion}


In this paper, we introduce and evaluate a novel unlearning algorithm tailored for linear and mixed linear classifiers, specifically targeting scenarios where the original training data is unavailable during the unlearning process. Our algorithm is a general-purpose method adaptable to a wide range of convex loss functions, enabling its application across diverse contexts where different convex loss functions are employed. This flexibility makes it a versatile tool for unlearning in various machine learning applications. 
We establish robust theoretical bounds for our algorithm, providing assurances of its reliability and effectiveness in unlearning tasks. These theoretical guarantees offer a solid basis for understanding the algorithm’s behavior and performance. Additionally, we examine the implications of these bounds and validate the practical efficacy of our algorithm through extensive experimental evaluations. The results demonstrate that the proposed algorithm performs exceptionally well, confirming its theoretical advantages and highlighting its potential for real-world applications.
\section{Acknowledgment}
\label{sec:acknowledgment}

This work was supported in part by the NSF CAREER Award CCF-2144927, NSF Award CCF-2008020, DURIP N000141812252, the UCR OASIS Fellowship, and the Amazon Research Award.

{
    \small
    \bibliographystyle{ieeenat_fullname}
    \bibliography{main}
}

\clearpage
\setcounter{page}{1}
\maketitlesupplementary

\onecolumn

 \section{Proof for Lemma 1 in more details}

 \begin{proof}
    From the definition of $\Psi(\mathrm{H})$:
\begin{equation}
    \Psi(\mathrm{X}) = \mathbb{E}_{\delta w \sim \mathcal{N}(\mathbf{0}, \mathbf{I})}\left [ (\frac{1}{2}\delta w^\top \mathrm{X} \delta w + \nabla_r^\top \delta w - \delta \mathcal{L}_r)^2\right]
    \label{eqn:new-obj-supp}
\end{equation}

By neglecting higher order terms in the taylor approximation we can say, $\delta \mathcal{L}_r \approx \frac{1}{2}\delta w^\top H_r \delta w + \nabla_r^\top \delta w$. Substituting $\delta \mathcal{L}_r$ from Equation \ref{eqn:new-obj-supp}:
\begin{align}
    \Psi(\mathrm{X}) &=  \mathbb{E}_{\delta w \sim \mathcal{N}(\mathbf{0}, \mathbf{I})}\left [ (\frac{1}{2}\delta w^\top \mathrm{X} \delta w - \frac{1}{2}\delta w^\top \mathrm{H}_r \delta w )^2\right] \\
    &=  \mathbb{E}_{\delta w \sim \mathcal{N}(\mathbf{0}, \mathbf{I})}\left [ (\frac{1}{2}\delta w^\top(\mathrm{X} - \mathrm{H}_r)\delta w)^2\right] \\
    &= \mathbb{E}_{\delta w \sim \mathcal{N}(\mathbf{0}, \mathbf{I})}\left [ (\frac{1}{2}\delta w^\top \mathrm{M} \delta w)^2\right]
\end{align}
where, we define $\mathrm{M} = (\mathrm{X} - \mathrm{H}_r)$.
We will now prove the following:

\begin{equation}
    \mathbb{E}_{\delta w \sim \mathcal{N}(\mathbf{0}, \mathbf{I})}\left [ (\frac{1}{2}\delta w^\top \mathrm{M} \delta w)^2\right] = \frac{1}{2}\mathrm{trace}(\mathrm{M}^2) + \frac{1}{4}\mathrm{trace}(\mathrm{M})^2
\end{equation}

\section*{Proof of Expectation}

We aim to prove the following equation:
\begin{equation}
\mathbb{E}_{\delta w \sim \mathcal{N}(\mathbf{0}, \mathbf{I})} \left[\left(\frac{1}{2} \delta w^\top \mathrm{M} \delta w\right)^2\right] = \frac{1}{2} \mathrm{trace}(\mathrm{M}^2) + \frac{1}{4} \mathrm{trace}(\mathrm{M})^2,
\end{equation}
where \( \delta w \sim \mathcal{N}(\mathbf{0}, \mathbf{I}) \) is a Gaussian random vector with zero mean and identity covariance, and \( \mathrm{M} \) is a symmetric matrix.

\subsection*{Step 1: Reformulation of the Expectation}

Let \( X = \delta w^\top \mathrm{M} \delta w \). Then, the left-hand side can be expressed as:
\begin{equation}
\mathbb{E}\left[\left(\frac{1}{2} \delta w^\top \mathrm{M} \delta w\right)^2\right] = \frac{1}{4} \mathbb{E}[X^2],
\end{equation}
where \( X^2 = (\delta w^\top \mathrm{M} \delta w)^2 \). Substituting \( X = \delta w^\top \mathrm{M} \delta w \), we expand \( X^2 \):
\begin{equation}
X^2 = (\delta w^\top \mathrm{M} \delta w)^2 = \sum_{i,j} \sum_{k,l} \mathrm{M}_{ij} \mathrm{M}_{kl} \delta w_i \delta w_j \delta w_k \delta w_l,
\end{equation}
where \( \mathrm{M}_{ij} \) denotes the \( (i,j) \)-th entry of \( \mathrm{M} \), and \( \delta w_i \) is the \( i \)-th component of \( \delta w \).

\subsection*{Step 2: Expectation of \( \delta w_i \delta w_j \delta w_k \delta w_l \)}

Since \( \delta w \sim \mathcal{N}(\mathbf{0}, \mathbf{I}) \), the components \( \delta w_i \) are independent Gaussian random variables with mean 0 and variance 1. The expectation \( \mathbb{E}[\delta w_i \delta w_j \delta w_k \delta w_l] \) depends on the indices \( i, j, k, l \). Using properties of Gaussian random variables, we have:
\begin{equation}
\mathbb{E}[\delta w_i \delta w_j \delta w_k \delta w_l] =
\begin{cases}
1, & \text{if } i = j, \, k = l, \, i \neq k, \\
1, & \text{if } i = k, \, j = l, \, i \neq j, \\
1, & \text{if } i = l, \, j = k, \, i \neq j, \\
1, & \text{if } i = j = k = l, \\
0, & \text{otherwise}.
\end{cases}
\end{equation}

This result follows from the Wick formula for moments of Gaussian random variables.

\subsection*{Step 3: Substituting into the Expectation}

Return to the expectation of \( X^2 \):
\begin{equation}
\mathbb{E}[X^2] = \sum_{i,j} \sum_{k,l} \mathrm{M}_{ij} \mathrm{M}_{kl} \mathbb{E}[\delta w_i \delta w_j \delta w_k \delta w_l].
\end{equation}

Using the cases derived above, the non-zero contributions arise in the following scenarios:
\begin{itemize}
    \item \textbf{Case 1: \( i = j, k = l, i \neq k \):} The contribution is:
    \begin{equation}
    \sum_{i \neq k} \mathrm{M}_{ii} \mathrm{M}_{kk} = \mathrm{trace}(\mathrm{M})^2.
    \end{equation}

    \item \textbf{Case 2: \( i = k, j = l, i \neq j \):} The contribution is:
    \begin{equation}
    \sum_{i,j} \mathrm{M}_{ij}^2 = \mathrm{trace}(\mathrm{M}^2).
    \end{equation}

    \item \textbf{Case 3: \( i = l, j = k, i \neq j \):} This is identical to the second case, contributing:
    \begin{equation}
    \mathrm{trace}(\mathrm{M}^2).
    \end{equation}

    \item \textbf{Case 4: \( i = j = k = l \):} The contribution is:
    \begin{equation}
    \sum_i \mathrm{M}_{ii}^2 = \mathrm{trace}(\mathrm{M}^2).
    \end{equation}
\end{itemize}

Combining these terms, the total expectation is:
\begin{equation}
\mathbb{E}[X^2] = 2 \mathrm{trace}(\mathrm{M}^2) + \mathrm{trace}(\mathrm{M})^2.
\end{equation}

\subsection*{Step 4: Final Simplification}

Substituting back into the original equation:
\begin{equation}
\mathbb{E}\left[\left(\frac{1}{2} \delta w^\top \mathrm{M} \delta w\right)^2\right] = \frac{1}{4} \mathbb{E}[X^2] = \frac{1}{4} \left( 2 \mathrm{trace}(\mathrm{M}^2) + \mathrm{trace}(\mathrm{M})^2 \right).
\end{equation}

Simplify to obtain:
\begin{equation}
\mathbb{E}\left[\left(\frac{1}{2} \delta w^\top \mathrm{M} \delta w\right)^2\right] = \frac{1}{2} \mathrm{trace}(\mathrm{M}^2) + \frac{1}{4} \mathrm{trace}(\mathrm{M})^2.
\end{equation}

So, clearly the minimizer of $\Psi(\mathrm{X})$ is at $\mathrm{M} = 0$ or $\mathrm{X} = \mathrm{H}_r$. 

However it is the ideal case, where we do not approximate $\delta \mathcal{L}_r$. In our algorithm, we are minimizing and approximate objective $\mathrm{\Tilde{\Psi}}(\textrm{X})$. 
Also from the definition we can say $\Tilde{\mathrm{f}}_i(X) = \mathrm{f}_i(\textrm{X})+(\delta \mathcal{L}_r(w_i)-\delta \mathcal{L}_f(w_i))$. Since we assume that $|\delta \mathcal{L}_r(w_i) - \delta \mathcal{L}_f(w_i)| \leq \epsilon \ \forall i$, we can derive the following inequality:
\begin{align}
    & (\mathrm{f}_i(\textrm{X}) - \epsilon) \leq \Tilde{\mathrm{f}}_i(X) \leq (\mathrm{f}_i(\textrm{X}) + \epsilon) \\
\implies &  \frac{1}{m} \sum_{i=1}^m \left( \mathrm{f}_i(\textrm{X})-\epsilon)\right)^2 \leq \mathrm{\Tilde{\Psi}(X)} \leq \frac{1}{m} \sum_{i=1}^m \left( \mathrm{f}_i(\textrm{X}) + \epsilon)\right)^2
\end{align}

Now we know:

\begin{align}
     \frac{1}{m} \sum_{i=1}^m \left( \mathrm{f}_i(\textrm{X}) + \epsilon)\right)^2 &= \mathbb{E}_{\delta w \sim \mathcal{N}(\mathbf{0}, \mathbf{I})}\left [ (\frac{1}{2}\delta w^\top \mathrm{M} \delta w + \epsilon)^2\right] \\
     &= \mathbb{E}_{\delta w \sim \mathcal{N}(\mathbf{0}, \mathbf{I})}\left [ (\frac{1}{2}\delta w^\top \mathrm{M} \delta w )^2\right] + 2\epsilon \mathbb{E}_{\delta w \sim \mathcal{N}(\mathbf{0}, \mathbf{I})}\left [ (\frac{1}{2}\delta w^\top \mathrm{M} \delta w ) \right] + \epsilon^2 \\
     & \leq \frac{1}{2}\mathrm{trace}(\mathrm{M}^2) + \frac{1}{4}\mathrm{trace}(\mathrm{M})^2 + \frac{1}{2}\mathrm{trace}(2 \epsilon \mathrm{M}) \\
     &= \frac{1}{2}\mathrm{trace}(\mathrm{M}^2+ 2\epsilon \mathrm{M}) + \frac{1}{4}\mathrm{trace}(\mathrm{M})^2
\end{align}

Similarly expanding the lower bound also, we get the following inequality:

\begin{align}
    & \frac{1}{2}\mathrm{trace}(\mathrm{M}^2- 2\epsilon \mathrm{M}) + \frac{1}{4}\mathrm{trace}(\mathrm{M})^2 \leq \mathrm{\Tilde{\Psi}(X)} \leq \frac{1}{2}\mathrm{trace}(\mathrm{M}^2+ 2\epsilon \mathrm{M}) + \frac{1}{4}\mathrm{trace}(\mathrm{M})^2 \\
\end{align}

By seperately taking derivatives of the upper and lower bounds above, if we set it to $0$, we get the following bound on the minimizer $\mathrm{M}$.

\begin{equation}
    -\frac{2\epsilon}{(2+d)}\mathrm{I}_d \leq \mathrm{M} \leq \frac{2\epsilon}{(2+d)}\mathrm{I}_d
\end{equation}
where, $\mathrm{I}_d \in \mathbb{R}^{d \times d}$ is the identity matrix. \\

\noindent \textbf{Details:}\\

\noindent The objective function is defined as:
\begin{equation}
f(\mathrm{M}) = \frac{1}{2} \mathrm{Tr}(\mathrm{M}^2 + 2\epsilon \mathrm{M}) + \frac{1}{4} (\mathrm{Tr}(\mathrm{M}))^2,
\end{equation}
where \( \mathrm{M} \) is a \( d \times d \) matrix, and \( \epsilon \) is a scalar parameter. To find the optimal \( \mathrm{M} \), we compute the gradient of \( f(\mathrm{M}) \) with respect to \( \mathrm{M} \) and solve \( \nabla_{\mathrm{M}} f(\mathrm{M}) = 0 \). \\

\noindent The gradient of each term in \( f(\mathrm{M}) \) is as follows: \\
\begin{itemize}
    \item The gradient of \( \frac{1}{2} \mathrm{Tr}(\mathrm{M}^2) \) is:
    \begin{equation}
    \nabla_{\mathrm{M}} \left( \frac{1}{2} \mathrm{Tr}(\mathrm{M}^2) \right) = \mathrm{M}.
    \end{equation}
    
    \item The gradient of \( \epsilon \mathrm{Tr}(\mathrm{M}) \) is:
    \begin{equation}
    \nabla_{\mathrm{M}} \left( \epsilon \mathrm{Tr}(\mathrm{M}) \right) = \epsilon \mathrm{I}_d,
    \end{equation}
    where \( \mathrm{I}_d \) is the \( d \times d \) identity matrix. \\
    
    \item The gradient of \( \frac{1}{4} (\mathrm{Tr}(\mathrm{M}))^2 \) is:
    \begin{equation}
    \nabla_{\mathrm{M}} \left( \frac{1}{4} (\mathrm{Tr}(\mathrm{M}))^2 \right) = \frac{1}{2} \mathrm{Tr}(\mathrm{M}) \mathrm{I}_d.
    \end{equation}
\end{itemize}

Combining these terms, the total gradient is:
\begin{equation}
\nabla_{\mathrm{M}} f(\mathrm{M}) = \mathrm{M} + \epsilon \mathrm{I}_d + \frac{1}{2} \mathrm{Tr}(\mathrm{M}) \mathrm{I}_d.
\end{equation}

Setting \( \nabla_{\mathrm{M}} f(\mathrm{M}) = 0 \), we have:
\begin{equation}
\mathrm{M} + \epsilon \mathrm{I}_d + \frac{1}{2} \mathrm{Tr}(\mathrm{M}) \mathrm{I}_d = 0.
\end{equation}
\noindent Rearranging, this becomes:
\begin{equation}
\mathrm{M} = -\epsilon \mathrm{I}_d - \frac{1}{2} \mathrm{Tr}(\mathrm{M}) \mathrm{I}_d.
\end{equation}

Let \( \mathrm{Tr}(\mathrm{M}) = \tau \). Substituting \( \tau \) into the equation, we get:
\begin{equation}
\mathrm{M} = -\epsilon \mathrm{I}_d - \frac{1}{2} \tau \mathrm{I}_d.
\end{equation}

Taking the trace on both sides:
\begin{equation}
\tau = \mathrm{Tr}(\mathrm{M}) = \mathrm{Tr}\left(-\epsilon \mathrm{I}_d - \frac{1}{2} \tau \mathrm{I}_d\right).
\end{equation}

Since \( \mathrm{Tr}(\mathrm{I}_d) = d \) for a \( d \times d \) matrix:
\begin{equation}
\tau = -\epsilon d - \frac{1}{2} \tau d.
\end{equation}

Rearranging to isolate \( \tau \):
\begin{equation}
\tau \left( 1 + \frac{d}{2} \right) = -\epsilon d.
\end{equation}

Solving for \( \tau \):
\begin{equation}
\tau = \frac{-\epsilon d}{1 + \frac{d}{2}} = \frac{-2 \epsilon d}{2 + d}.
\end{equation}

Substituting \( \tau \) Back into \( \mathrm{M} \)
Substitute \( \tau = \frac{-2 \epsilon d}{2 + d} \) into \( \mathrm{M} = -\epsilon \mathrm{I}_d - \frac{1}{2} \tau \mathrm{I}_d \):
\begin{equation}
\mathrm{M} = -\epsilon \mathrm{I}_d - \frac{1}{2} \left( \frac{-2 \epsilon d}{2 + d} \right) \mathrm{I}_d.
\end{equation}

Simplify:
\begin{equation}
\mathrm{M} = -\epsilon \mathrm{I}_d + \frac{\epsilon d}{2 + d} \mathrm{I}_d.
\end{equation}

Combine terms:
\begin{equation}
\mathrm{M} = \left( -\epsilon + \frac{\epsilon d}{2 + d} \right) \mathrm{I}_d.
\end{equation}

Simplify further:
\begin{equation}
\mathrm{M} = -\epsilon \left( 1 - \frac{d}{2 + d} \right) \mathrm{I}_d = -\epsilon \left( \frac{2}{2 + d} \right) \mathrm{I}_d.
\end{equation}

The optimal \( \mathrm{M} \) is:
\begin{equation}
\mathrm{M} = -\frac{2 \epsilon}{2 + d} \mathrm{I}_d,
\end{equation}
where \( d \) is the dimension of the matrix \( \mathrm{M} \).\\
This inequality implies that the if the solution of optimization~\ref{opt:main_opt} is $\hat{\textrm{H}_r}$, then 

\begin{equation}
    \textrm{H}_r-\frac{2\epsilon}{(2+d)}\mathrm{I}_d \preceq \hat{\textrm{H}}_r \preceq \textrm{H}_r+\frac{2\epsilon}{(2+d)}\mathrm{I}_d
\end{equation}

\noindent As a result we can conclude:
\begin{equation}
   \|\hat{\textrm{H}}_r - \textrm{H}_r\| = \| \Delta \textrm{H}_r\|_F \leq \frac{2 \epsilon \|\mathrm{I}_d\|_F}{(2+d)}
\end{equation}

\noindent Since $\|\mathrm{I}_d\|_F = \sqrt{d}$, we conclude the proof.

\end{proof}

\section{Additional Experiments}

We conducted experiments on the CIFAR-10, CIFAR-100, StanfordDogs, and Caltech-256 datasets using our proposed method for both linear classifier and mixed linear network cases. For all experiments, 500 perturbations were applied. The “Performance Gap” row represents the difference in performance between the methods Unlearned (+) and Unlearned (-). Unlearned (+) refers to unlearning using the remaining data samples, while Unlearned (-), our proposed method, performs unlearning without relying on the remaining data samples.   

\subsection{Linear Classifier Experiments}

For the linear classifier experiments, a ResNet-18 model, pretrained on the ImageNet dataset and excluding the penultimate layer, was used to generate activations for our linear classifier. $10 \%$ of the data was selected to be forgotten. As shown in \cref{tab:supp-linear-class}, the Unlearned (-) method achieves unlearning performance that is significantly close to the Unlearned (+) method. This result demonstrates that our method can perform well even without access to the remaining dataset, provided that the theoretical assumptions hold.

\captionsetup[table]{skip=12pt} 
\begin{table}[H]
  \centering
  \renewcommand{\arraystretch}{1.4} 
  \setlength{\tabcolsep}{8pt} 
  \rowcolors{2}{gray!10}{white} 
  \begin{tabular}{l c c c c}
    \toprule
    \rowcolor{gray!20}
    \textbf{Method} & \textbf{Test Data} & \textbf{Remaining Data} & \textbf{Forget Data} & \textbf{MIA} \\
    \midrule
    \rowcolor{white}
    \multicolumn{5}{c}{\textbf{CIFAR-10}} \\ \midrule
    \rowcolor{cyan!10}
    Retrained & 72\% & 74\% & 72\% & 50\% \\
    Unlearned (+) & 70.3\% & 72.4\% & 70.2\% & 50.2\% \\
    Unlearned (-) & 70\% & 71\% & 68\% & 51.5\% \\
    \textbf{Performance Gap} & \textbf{0.3\%} & \textbf{1.4\%} & \textbf{1.8\%} & \textbf{1.3\%} \\
    \midrule
    \rowcolor{white}
    \multicolumn{5}{c}{\textbf{CIFAR-100}} \\ \midrule
    \rowcolor{cyan!10}
    Retrained & 56.0\% & 61.4\% & 56.2\% & 50.4\% \\
    Unlearned (+) & 49.8\% & 59.7\% & 48.7\% & 51.4\% \\
    Unlearned (-) & 51.6\% & 59.6\% & 49.2\% & 51.8\% \\
    \textbf{Performance Gap} & \textbf{1.8\%} & \textbf{0.1\%} & \textbf{0.5\%} & \textbf{0.4\%} \\
    \midrule
    \rowcolor{white}
    \multicolumn{5}{c}{\textbf{StanfordDogs}} \\ \midrule
    \rowcolor{cyan!10}
    Retrained & 59.3\% & 67.8\% & 60.2\% & 50.2\% \\
    Unlearned (+) & 54.6\% & 65.1\% & 53.5\% & 51.7\% \\
    Unlearned (-) & 55.0\% & 68.3\% & 54.5\% & 50.9\% \\
    \textbf{Performance Gap} & \textbf{0.4\%} & \textbf{3.2\%} & \textbf{1.0\%} & \textbf{1.2\%} \\
    \midrule
    \rowcolor{white}
    \multicolumn{5}{c}{\textbf{Caltech-256}} \\ \midrule
    \rowcolor{cyan!10}
    Retrained & 54.4\% & 61.1\% & 54.8\% & 50\% \\
    Unlearned (+) & 47.6\% & 57.3\% & 45.0\% & 51.3\% \\
    Unlearned (-) & 49.4\% & 57.9\% & 48.4\% & 50.6\% \\
    \textbf{Performance Gap} & \textbf{1.8\%} & \textbf{0.6\%} & \textbf{3.4\%} & \textbf{0.7\%} \\
    \bottomrule
  \end{tabular}
  \caption{Linear classifier experiments on CIFAR-10, CIFAR-100, StanfordDogs, and Caltech-256 datasets. A ResNet-18 model, pretrained on ImageNet and excluding the penultimate layer, was used to generate activations for the linear classifier. $10 \%$ of the data was selected for unlearning. The "Performance Gap" row indicates the difference between the methods Unlearned (+) and Unlearned (-), where Unlearned (+) utilizes the remaining data samples, and Unlearned (-) (our proposed method) operates without access to the remaining data samples.}
  \label{tab:supp-linear-class}
\end{table}

\subsection{Mixed-Linear Network Experiments}

For the mixed linear network experiments, a ResNet-18 model pretrained on the ImageNet dataset was used as the base model. The last few layers were linearized for training on the datasets. $15 \%$ of the data was selected for unlearning. Unlearned (+) refers to the unlearning process that utilizes the remaining data samples, while Unlearned (-) (our proposed method) performs unlearning without access to the remaining data samples. As shown in \cref{tab:supp-mix-lin}, the Unlearned (-) and Unlearned (+) methods exhibit very similar performance. This result demonstrates that our method (Unlearned (-)) can achieve significantly strong results even without access to the remaining samples for the mixed linear network case as well.

\captionsetup[table]{skip=12pt} 
\begin{table}[H]
  \centering
  \renewcommand{\arraystretch}{1.4} 
  \setlength{\tabcolsep}{8pt} 
  \rowcolors{2}{gray!10}{white} 
  \begin{tabular}{l c c c c}
    \toprule
    \rowcolor{gray!20}
    \textbf{Method} & \textbf{Test Data} & \textbf{Remaining Data} & \textbf{Forget Data} & \textbf{MIA} \\
    \midrule
    \rowcolor{white}
    \multicolumn{5}{c}{\textbf{CIFAR-10}} \\ \midrule
    \rowcolor{cyan!10}
    Retrained & 86.3\% & 93.6\% & 87.7\% & 50.2\% \\
    Unlearned (+) & 85.6\% & 91.9\% & 85.5\% & 50.0\% \\
    Unlearned (-) & 84.5\% & 93.5\% & 86.7\% & 51.2\% \\
    \textbf{Performance Gap} & \textbf{1.1\%} & \textbf{1.6\%} & \textbf{1.2\%} & \textbf{1.2\%} \\
    \midrule
    \rowcolor{white}
    \multicolumn{5}{c}{\textbf{CIFAR-100}} \\ \midrule
    \rowcolor{cyan!10}
    Retrained & 63.1\% & 68.9\% & 63.3\% & 50.2\% \\
    Unlearned (+) & 61.9\% & 67.9\% & 62.2\% & 50.7\% \\
    Unlearned (-) & 61.4\% & 70.1\% & 62.2\% & 51.7\% \\
    \textbf{Performance Gap} & \textbf{0.5\%} & \textbf{2.2\%} & \textbf{0.0\%} & \textbf{1.0\%} \\
    \midrule
    \rowcolor{white}
    \multicolumn{5}{c}{\textbf{StanfordDogs}} \\ \midrule
    \rowcolor{cyan!10}
    Retrained & 73.6\% & 76.8\% & 72.1\% & 50.6\% \\
    Unlearned (+) & 69.0\% & 76.1\% & 70.8\% & 50.2\% \\
    Unlearned (-) & 70.1\% & 77.6\% & 71.2\% & 51.3\% \\
    \textbf{Performance Gap} & \textbf{1.1\%} & \textbf{1.5\%} & \textbf{0.6\%} & \textbf{1.1\%} \\
    \midrule
    \rowcolor{white}
    \multicolumn{5}{c}{\textbf{Caltech-256}} \\ \midrule
    \rowcolor{cyan!10}
    Retrained & 60.3\% & 66.9\% & 60.5\% & 50.0\% \\
    Unlearned (+) & 61.3\% & 65.6\% & 61.8\% & 49.8\% \\
    Unlearned (-) & 58.4\% & 66.2\% & 60.7\% & 51.0\% \\
    \textbf{Performance Gap} & \textbf{2.9\%} & \textbf{1.6\%} & \textbf{1.1\%} & \textbf{1.2\%} \\
    \bottomrule
  \end{tabular}
   \caption{Mixed linear network experiments on CIFAR-10, CIFAR-100, StanfordDogs, and Caltech-256 datasets. A ResNet-18 model, pretrained on the ImageNet dataset, was used as the base model. The last few layers were linearized for training with the datasets. $15 \%$ of the data was selected for unlearning. The "Performance Gap" row indicates the difference between the methods Unlearned (+) and Unlearned (-). Unlearned (+) performs unlearning using the remaining data samples, while Unlearned (-), our proposed method, achieves competitive results even without access to the remaining data samples.}
  \label{tab:supp-mix-lin}
\end{table}

\end{document}